\documentclass[11pt]{article}
\usepackage[protrusion=false]{microtype}
\usepackage{xcolor}
\definecolor{darkblue}{rgb}{0.0, 0.0, 0.4}
\usepackage[colorlinks=true,allcolors=darkblue]{hyperref}
\usepackage{graphicx}
\usepackage{booktabs} 

\topmargin 0.0cm
\oddsidemargin 0.2cm
\textwidth 16cm 
\textheight 21cm
\footskip 1.0cm

\usepackage{amsmath}
\usepackage{amssymb}
\usepackage{amsthm,thmtools}
\usepackage[shortlabels]{enumitem}
\usepackage{amsfonts}
\RequirePackage{algorithm}
\RequirePackage{algorithmic}
\usepackage{bm,upgreek}
\usepackage{mathrsfs}  
\usepackage[round]{natbib}

\bibliographystyle{abbrvnat}
 
\usepackage{calc}
\usepackage{subcaption}
\usepackage{authblk}

\theoremstyle{plain}
\newtheorem{theorem}{Theorem}

\newtheorem{condition}{Condition}
\newtheorem{lemma}[theorem]{Lemma}

\newcommand{\norm}[1]{\left\lVert#1\right\rVert}
\DeclareMathOperator*{\argmin}{argmin}

\DeclareMathOperator*{\esssup}{ess\,sup}
\DeclareMathOperator*{\essinf}{ess\,inf}
\usepackage[capitalize,noabbrev]{cleveref}

\DeclareFontFamily{U}{jkpmia}{}
\DeclareFontShape{U}{jkpmia}{m}{it}{<->s*jkpmia}{}
\DeclareFontShape{U}{jkpmia}{bx}{it}{<->s*jkpbmia}{}
\DeclareMathAlphabet{\mathfrak}{U}{jkpmia}{m}{it}
\SetMathAlphabet{\mathfrak}{bold}{U}{jkpmia}{bx}{it}

\usepackage[hang,flushmargin]{footmisc}
\newcommand\blfootnote[1]{
  \begingroup
  \renewcommand\thefootnote{}\footnote{#1}%
  \addtocounter{footnote}{-1}%
  \endgroup
}

\title{Causal isotonic calibration\\ for heterogeneous treatment effects}

\date{Version 2: June 5, 2023}

\author[1]{Lars van der Laan*}
\author[2]{Ernesto Ulloa-P\'erez*}
\author[3,1]{Marco Carone}
\author[1,3]{Alex Luedtke}
 
\affil[1]{\footnotesize Department of Statistics, University of Washington, USA}
\affil[2]{\footnotesize Department of Biostatistics, Epidemiology, and Informatics,  University of Pennsylvania, USA}
\affil[3]{\footnotesize Department of Biostatistics, University of Washington, USA}

\begin{document}

\allowdisplaybreaks
\maketitle
\blfootnote{* These authors contributed equally to this work.}

\begin{abstract}
We propose causal isotonic calibration, a novel nonparametric method for calibrating predictors of heterogeneous treatment effects. Furthermore, we introduce cross-calibration, a data-efficient variant of calibration that eliminates the need for hold-out calibration sets. Cross-calibration leverages cross-fitted predictors and generates a single calibrated predictor using all available data. Under weak conditions that do not assume monotonicity, we establish that both causal isotonic calibration and cross-calibration achieve fast doubly-robust calibration rates, as long as either the propensity score or outcome regression is estimated accurately in a suitable sense. The proposed causal isotonic calibrator can be wrapped around any black-box learning algorithm, providing robust and distribution-free calibration guarantees while preserving predictive performance.
\end{abstract}

\section{Introduction}

Estimation of causal effects via both randomized experiments and observational studies is critical to understanding the effects of interventions and informing policy. Moreover, it is often the case that understanding treatment effect heterogeneity can provide more insights than overall population effects \citep{obermeyer2016predicting, athey2017beyond}. For instance, a study of treatment effect heterogeneity can help elucidate the mechanism of an intervention, design policies targeted to subpopulations that can most benefit \citep{imbens2009recent}, and predict the effect of interventions in populations other than the ones in which they were developed. These necessities have arisen in a wide range of fields, such as marketing \citep{devriendt2018literature}, the social sciences \citep{imbens2009recent}, and the health sciences \citep{kent2018personalized}. For example, in the health sciences, heterogeneous treatment effects (HTEs) are of high importance to understanding and quantifying how certain exposures or interventions affect the health of various subpopulations \citep{dahabreh2016using, lee2020causal}. Potential applications include prioritizing treatment to certain sub-populations when treatment resources are scarce, or individualizing treatment assignments when the treatment can have no effect (or even be harmful) in certain subpopulations \citep{dahabreh2016using}. As an example, treatment assignment based on risk scores has been used to provide clinical guidance in cardiovascular disease prevention \citep{lloyd2019use} and to improve decision-making in oncology \citep{collins2015new,cucchiara2018genomic}.

A wide range of statistical methods are available for assessing HTEs, with recent examples including \citet{wager2018estimation}, \citet{carnegie2019examining}, \citet{lee2020causal}, \citet{yadlowsky2021evaluating}, and \citet{nie2021quasi}, among others. In particular, many methods, including  \citet{imbens2009recent} and \citet{dominici2020controlled}, scrutinize HTEs via conditional average treatment effects (CATEs). The CATE is the difference in the conditional mean of the counterfactual outcome corresponding to treatment versus control given covariates, which can be defined at a group or individual level. When interest lies in predicting treatment effect, the CATE can be viewed as the oracle predictor of the individual treatment effect (ITE) that can feasibly be learned from data. Optimal treatment rules have been derived based on the sign of the CATE estimator \citep{murphy2003optimal, robins2004optimal}, with more recent works incorporating the use of flexible CATE estimators \citep{luedtke2016statistical}. Thus, due to its wide applicability and scientific relevance, CATE estimation has been of great interest in statistics and data science.  

Regardless of its quality as a proxy for the true CATE, it is generally accepted that predictions from a given treatment effect predictor can still be useful for decision-making. However, theoretical guarantees for rational decision-making using a given treatment effect predictor typically hinge on the predictor being a good approximation of the true CATE. Accurate CATE estimation can be challenging because the nuisance parameters involved can be non-smooth, high-dimensional, or otherwise difficult to model correctly. Additionally, a CATE estimator obtained from samples of one population, regardless of its quality, may not generalize well to different target populations \citep{frangakis2009calibration}. Usually, CATE estimators (often referred to as learners) build upon estimators of the conditional mean outcome given covariates and treatment level (i.e., outcome regression), the probability of treatment given covariates (i.e., propensity score), or both. For instance, plug-in estimators such as those studied in \citet{kunzel2019metalearners} --- so-called T-learners --- are obtained by taking the difference between estimators of the outcome regression obtained separately for each treatment level. T-learners can suffer in performance because they rely on estimation of nuisance parameters that are at least as non-smooth or high-dimensional as the CATE, and are prone to the misspecification of involved outcome regression models; these issues can result in slow convergence or inconsistency of the CATE estimator. Doubly-robust and Neyman-orthogonal CATE estimation strategies like the DR-learner and R-learner \citep{wager2018estimation, foster2019orthogonal, nie2021quasi, kennedy2020optimal} mitigate some of these issues by allowing for comparatively fast CATE estimation rates even when nuisance parameters are estimated at slow rates. However, while less sensitive to the learning complexity of the nuisance parameters, their predictive accuracy in finite-samples still relies on potentially strong smoothness assumptions on the CATE. Even when the CATE is estimated consistently,
predictions based on statistical learning methods often produce biased predictions that overestimate or underestimate the true CATE in the extremes of the predicted values \citep{van2019models, dwivedi2020stable}. For example, the `pooled cohort equations' \citep{goff20142013} risk model used to predict cardiovascular disease has been found to underestimate risk in patients with lower socioeconomic status or chronic inflammatory diseases \citep{lloyd2019use}.  The implications of biased treatment effect predictors are profound when used to guide treatment decisions and can range from harmful use to withholding of treatment \citep{van2019calibration}.

Due to the consequence of treatment decision-making, it is essential to guarantee, under minimal assumptions, that treatment effect predictions are representative in magnitude and sign of the actual effects, even when the predictor is a poor approximation of the CATE. In prediction settings, the aim of bestowing these properties on a given predictor is commonly called \emph{calibration}. A calibrated treatment effect predictor has the property that the \emph{average} treatment effect among individuals with identical predictions is close to their shared prediction value. Such a predictor is more robust against the over-or-under estimation of the CATE in extremes of predicted values. It also has the property that the best predictor of the ITE given the predictor is the predictor itself, which facilitates transparent treatment decision-making. In particular, the optimal treatment rule \citep{murphy2003optimal} given only information provided by the predictor is the one that assigns the treatment predicted to be most beneficial. Consequently, the rule implied by a perfectly calibrated predictor is at least as favorable as the best possible static treatment rule that ignores HTEs. While complementing one another, the aims of calibration and prediction are fundamentally different. For instance, a constant treatment effect predictor can be well-calibrated even though it is a poor predictor of treatment effect heterogeneity \citep{gupta2020DistrFree}. In view of this, calibration methods are typically designed to be wrapped around a given black-box prediction pipeline to provide strong calibration guarantees while preserving predictive performance, thereby mitigating several prediction challenges mentioned previously.

 In the machine learning literature, calibration has been widely used to enhance prediction models for classification and regression \citep{bella2010calibration}. However, due to the comparatively little research on calibration of treatment effect predictors, such benefits have not been realized to the same extent in the context of heterogeneous treatment effect prediction. Several works have contributed to addressing this gap in the literature. \citet{brooks2012targeted} propose a targeted (or debiased) machine learning framework \citep{vanderLaanRose2011} for within-bins calibration that could be applied to the CATE setting. \citet{zhang2016new} and \citet{josey2022calibration} consider calibration of marginal treatment effect estimates for new populations but do not consider CATEs. \citet{dwivedi2020stable} consider estimating calibration error of CATE predictors for subgroup discovery using randomized experimental data. 
  \citet{chernozhukov2018generic} and \citet{leng2021calibration} propose CATE methods for linear calibration, a weaker form of calibration, in randomized experiments. For causal forests, \citet{athey2019estimating} evaluate model calibration using a doubly-robust estimator of the ATE among observations above or below the median predicted CATE. \citet{lei2021conformal} propose conformal inference methods for constructing calibrated prediction intervals for the ITE from a given predictor but do not consider calibration of the predictor itself. \citet{xu2022calibration} propose a nonparametric doubly-robust estimator of the calibration error of a given treatment effect predictor, which could be used to detect uncalibrated predictors. Our work builds upon the above works by providing a nonparametric doubly-robust method for calibrating treatment effect predictors in general settings. 

   This paper is organized as follows. In Section \ref{sec:secc2}, we introduce our notation and formally define calibration. There we also provide an overview of traditional calibration methods. In Section \ref{sec:secc4}, we outline our proposed approach, and we describe its theoretical properties in Section \ref{sec:secc5}. 
   In Section \ref{sec:secc6}, we examine the performance of our method in simulations.  
\section{Statistical Setup}\label{sec:secc2}

\subsection{Notation and Definitions} 
\label{sec::notation}
Suppose we observe $n$ independent and identically distributed realizations of data unit $O:=(W, A, Y)$ drawn from a distribution $P$, where $W \in \mathcal{W} \subset  \mathbb{R}^d$ is a vector of baseline covariates, $A \in \{0,1\}$ is a binary indicator of treatment, and $Y \in \mathcal{Y} \subset \mathbb{R}$ is an outcome. For instance, $W$ can include a patient's demographic characteristics and medical history, $A$ can indicate whether an individual is treated (1) or not (0), and $Y$ could be a binary indicator of a successful clinical outcome. We denote by $\mathcal{D}_n:= \{O_1,O_2,\ldots,O_n\}$ the observed dataset, with $O_i:=(W_i,A_i,Y_i)$ representing the observation on the $i^{th}$ study unit.

For covariate value $w\in\mathcal{W}$ and treatment level $a\in\{0,1\}$, we denote  by $\pi_0(w) := P(A = 1|W = w)$ the propensity score and by $\mu_0(a,w) := E(Y\,|\,A = a, W = w)$ the outcome regression. The individual treatment effect is $Y_1 - Y_0$, where $Y_a$ represents the potential outcome obtained by setting $A=a$. As convention, we take higher values of $Y_1-Y_0$ to be desirable. We assume that the contrast $\tau_0(w) := \mu_0(1,w) - \mu_0(0,w)$ equals the true CATE, $E(Y_1 - Y_0 \,|\, W = w)$, which holds under causal assumptions \citep{rubin1974}. Throughout, we denote by $\|\cdot\|$ the $L^2(P)$ norm, that is, $\|f\|^2=\int [f(w)]^2dP_W(w)$ for any given $P_W$-square integrable function $f:\mathcal{W}\rightarrow\mathbb{R}$, where $P_W$ is the marginal distribution of $W$ implied by $P$. We deliberately take as convention that the median $\text{median}\{x_1, x_2, \dots, x_k\}$ of a set $\{x_1, x_2, \dots, x_k\}$ equals the $\lfloor k/2\rfloor^{th}$ order statistic of this set, where $\lfloor k/2\rfloor := \max\{ z \in \mathbb{N}: z \leq k/2\}$.

Let $\tau:\mathcal{W} \rightarrow \mathbb{R}$ be a treatment effect predictor, that is, a function that maps a realization $w$ of $W$ to a treatment effect prediction $\tau(w)$. In practice, $\tau$ can be obtained using any black-box algorithm. Below, we first consider $\tau$ to be fixed, though we later address situations in which $\tau$ is learned from the data used for subsequent calibration. 
We define the calibration function $\gamma_0(\tau,w):= E[Y_1 - Y_0| \tau(W) = \tau(w)]$ as the conditional mean of the individual treatment effect given treatment effect score value $\tau(w)$. By the tower property, $\gamma_0(\tau,w)=E[\tau_0(W)\,|\,\tau(W) = \tau(w)]$, and so, expectations only involving $\gamma_0(\tau,W)$ and other functions of $W$ can be taken with respect to  $P_W$. 

 The solution to an isotonic regression problem is typically nonunique. Throughout this text, we follow \citet{groeneboom1993isotonic} in taking the unique c\`{a}dl\`{a}g piece-wise constant solution of the isotonic regression problem that can only take jumps at observed values of the predictor.
 
\subsection{Measuring Calibration and the Calibration-Distortion Decomposition}

Various definitions of risk predictor calibration have been proposed in the literature --- see \citet{gupta2021distribution} and \citet{gupta2020DistrFree} for a review. Here, we outline our definition of calibration and its rationale. Given a treatment effect predictor $\tau$, the best predictor of the individual treatment effect in terms of MSE is $w\mapsto\gamma_0(\tau,w) := E[Y_1 - Y_0\,|\, \tau(W) = \tau(w)]$. By the law of total expectation, this predictor has the property that, for any interval $[a,b)$,
\begin{equation}
\label{eq2:cal}
E\left\{\left[\tau_0(W)-\gamma_0(\tau,W)\right]I(\gamma_0(\tau,W)\in [a,b))\right\}=0\ .
\end{equation}
 Equation \ref{eq2:cal} indicates that $\gamma_0(\tau,\cdot)$ is perfectly calibrated on $[a,b)$. Therefore, when a given predictor $\tau$ is such that $\tau(W) = \gamma_0(\tau,W)$ with $P$-probability one, $\tau$ is said to be perfectly calibrated \citep{gupta2020DistrFree} for the CATE --- for brevity, we omit ``for the CATE'' hereafter when the type of calibration being referred to is clear from context. 

In general, perfect calibration cannot realistically be achieved in finite samples. A more modest goal is for the predictor $\tau$ to be approximately calibrated in that $\tau(w)$ is close to $\gamma_0(\tau,w)$ across all covariate values $w \in \mathcal{W}$. This naturally suggests the calibration measure:
\begin{equation}
\label{eq1:cal}
\text{CAL}(\tau):= \int\left[\gamma_0(\tau,w) - \tau(w)\right]^2dP_W(w).
\end{equation}
This measure, referred to as the $\ell_2$-expected calibration error, arises both in prediction \citep{gupta2020DistrFree} and in the assessment of treatment effect heterogeneity  \citep{xu2022calibration}. We note that $\text{CAL}(\tau)$ is zero if $\tau$ is perfectly calibrated. Additionally, averaging in $\text{CAL}(\tau)$ with respect to measures other than $P_W$ could be more relevant in certain applications; such cases can occur, for instance, when there is a change of population that results in covariate shift and we are interested in measuring how well $\tau$ is calibrated in the new population. 

Interestingly, the above calibration measure plays a role in a decomposition of the mean squared error (MSE) between the treatment predictor and the true CATE, in that
\begin{align}
\begin{split}
    \text{MSE}(\tau)\ :=&\ \ \|\tau_0-\tau\|^2
    = \text{CAL}(\tau) + \text{DIS}(\tau)\ ,
\end{split}
\label{eq:calibrationDistortion}
\end{align}
with $\text{DIS}(\tau):= E\{var[\tau_0(W)\,|\,\tau(W)]\}$ a quantity we term the distortion of $\tau$. We refer to the above as a \emph{calibration-distortion} decomposition of the MSE. A consequence of the calibration-distortion decomposition is that MSE-consistent CATE estimators are also calibrated asymptotically. However, particularly in settings where the covariates are high-dimensional or the CATE is nonsmooth, the calibration error rate for such predictors can be arbitrarily slow ---  this is discussed further after Theorem \ref{theorem1}.

To interpret $\text{DIS}(\tau)$, we find it helpful to envision a scenario in which a distorted message is passed between two persons. The goal is for Person 2 to discern the value of $\tau_0(w)$, where the value of $w\in \mathcal{W}$ is only known to Person 1. 
Person 1 transmits $w$, which is then distorted through a function $\tau$ and received by Person 2. Person 2 knows the functions $\tau$ and $\tau_0$, and may use this information to try to discern $\tau_0(w)$. If $\tau$ is one-to-one, $\tau_0(w)$ can be discerned by simply applying $\tau_0\circ \tau^{-1}$ to the received message $\tau(w)$. More generally, whenever there exists a function $f$ such that $\tau_0=f\circ \tau$, Person 2 can recover the value of $\tau_0(w)$. For example, if $\tau=\tau_0$ then $f$ is the identity function. If no such function $f$ exists, it may not be possible for Person 2 to recover the value of $\tau_0(w)$. Instead, they may predict $\tau_0(w)$ based on $\tau(w)$ via $\gamma_0(\tau,w)$. Averaged over $W\sim P_W$, the MSE of this approach is precisely $\text{DIS}(\tau)$. 
See Equation~3 in \citet{kuleshov2015calibrated} for a related decomposition of $E\,[\{Y-\tau(X)\}^2]=\text{MSE}(\tau)+E\,[\{Y-\tau_0(X)\}^2]$  derived in the context of probability forecasting.

The calibration-distortion decomposition shows that, at a given level of distortion, better-calibrated treatment effect predictors have lower MSE for the true CATE function. We will explore this fact later in this work when showing that, in addition to improving calibration, our proposed calibration procedure can improve the MSE of CATE predictors.  

\subsection{Calibrating Predictors: desiderata and classical methods}

In most calibration methods, the key goal is to find a function $\theta : \mathbb{R}\rightarrow\mathbb{R}$ of a given predictor $\tau$ such that  $\text{CAL}(\theta \circ \tau)<\text{CAL}(\tau)$, where $\theta \circ \tau$ refers to the composed predictor $w\mapsto \theta(\tau(w))$. A mapping $\theta$ that pursues this objective is referred to as a \emph{calibrator}. Ideally, a calibrator $\theta_n$ for $\tau$ constructed from the dataset $\mathcal{D}_n$ should satisfy the following desiderata:
\begin{enumerate}[ref=\arabic*,label=\underline{\text{Property }\arabic*:},leftmargin=*]
    \item\label{property1} $\text{CAL}(\theta_n\circ \tau)$ tends to zero quickly as $n$ grows;\vspace{-.075in}
    \item\label{property2} $\theta_n\circ \tau$ and $\tau$ are comparably predictive of $\tau_0$.  
\end{enumerate}
Property \ref{property1} states the primary objective of a calibrator, that is, to yield a well-calibrated predictor. Property \ref{property2} requires that the calibrator not destroy the predictive power of the initial predictor in the pursuit of Property \ref{property1}, which would occur if the calibration term in decomposition \eqref{eq:calibrationDistortion} were made small at the cost of dramatic inflation of the distortion term.

In the traditional setting of classification and regression, a natural aim is to learn, for $a \in \{0,1\}$, a predictor $w \mapsto \nu^{(a)}(w)$ of the outcome $Y$ among individuals with treatment $A=a$. The best possible such predictor is given by the treatment-specific outcome regression $w \mapsto \mu_0(a,w)$. For $a \in \{0,1\}$, $\nu^{(a)}$ is said to be calibrated for the outcome regression if $\nu^{(a)}(w) \approx E(Y \mid  \nu^{(a)}(W) = \nu^{(a)}(w), A = a)$ for $P_0$-almost every $w$. Such a calibrated predictor can be obtained using existing calibration methods for regression \citep{huang2020tutorial}, which we review in the next paragraph. It is natural to wonder, then, whether existing calibration approaches can be directly used to calibrate for the CATE. As a concrete example, given predictors $\nu^{(1)}$ and $\nu^{(0)}$ of $\mu_0(1, \cdot)$ and $\mu_0(0, \cdot)$, a natural CATE predictor is the T-learner $\tau := \nu^{(1)} - \nu^{(0)}$. However, even if $\nu^{(1)}$ and $\nu^{(0)}$ are calibrated for their respective outcome regressions, the predictor $\tau$ can still be poorly calibrated for the CATE. Indeed, in settings with treatment-outcome confounding, T-learners can be poorly calibrated when the calibrated predictors $\nu^{(1)}$ and $\nu^{(0)}$ are poor approximations of their respective outcome regressions. As an extreme example, suppose that $\nu^{(a)}$ equals the constant predictor $w \mapsto E(Y \mid A = a) $ for $a \in \{0,1\}$, which is perfectly calibrated for the outcome regression. Then, the corresponding T-learner $ \tau(\cdot) = E(Y \mid A =1) - E(Y \mid A = 0)$ typically has poor calibration for the CATE in observational settings.

 In classification and regression settings \citep{huang2020tutorial}, the most commonly used calibration methods include Platt's scaling \citep{platt1999probabilistic}, histogram binning \citep{zadrozny2001obtaining}, Bayesian binning into quantiles \citep{naeini2015obtaining}, and isotonic calibration \citep{zadrozny2002transforming, niculescu2005obtaining}. Broadly, Platt's scaling is designed for binary outcomes and uses the estimated values of the predictor to fit the logistic regression model \[\text{logit}\,P(Y=1\,|\,\tau(W) = t)= \alpha + \beta t\] with $\alpha,\beta\in\mathbb{R}$. While it typically satisfies Property \ref{property2}, Platt's scaling is based on strong parametric assumptions and, as a consequence, may lead to predictions with significant calibration error, even asymptotically \citep{gupta2020DistrFree}. Nevertheless, Platt's scaling may be preferred when limited data is available. Histogram or quantile binning involves partitioning the sorted values of the predictor into a fixed number of bins. Given an initial prediction, the calibrated prediction is the empirical mean of the observed outcome values within the corresponding prediction bin. A significant limitation of histogram binning is that it requires a priori specification of the number of bins. Selecting too few bins can significantly degrade the predictive power of the calibrated predictor, whereas selecting too many bins can lead to poor calibration. Bayesian binning improves upon histogram binning by considering multiple binning models and their combinations; nevertheless, it still requires pre-specification of binning models and prior distributions.

Isotonic calibration is a histogram binning method that learns the bins from data using isotonic regression, a nonparametric method traditionally used for estimating monotone functions \citep{barlow1972isotonic, martino2019calibration, huang2020tutorial}. Specifically, the bins are selected by minimizing an empirical MSE criterion under the constraint that the calibrated predictor is a nondecreasing monotone transformation of the original predictor. Isotonic calibration is motivated by the heuristic that, for a good predictor $\tau$, the calibration function $\gamma_0(\tau, \cdot)$ should be approximately monotone as a function of $\tau$. For instance, when $\tau = \tau_0$, the map $\tau_0 \mapsto \gamma_0(\tau_0, \cdot) = \tau_0$ is the identity function. 
Despite its popularity and strong performance in practice \citep{zadrozny2002transforming, niculescu2005obtaining, gupta2021distribution}, 
 to date, whether isotonic calibration satisfies distribution-free calibration guarantees remains an open question \citep{gupta2022post}. In this work, we will show that isotonic calibration satisfies a distribution-free calibration guarantee in the sense of Property \ref{property1}. We further establish that Property \ref{property2} holds, in that the isotonic selection criterion ensures that the calibrated predictor is at least as predictive as the original predictor up to negligible error.

\section{Causal Isotonic Calibration} \label{sec:secc4}
In real-world experiments, \citet{dwivedi2020stable} found empirically that state-of-the-art CATE estimators tend to be poorly calibrated. However, strikingly, the authors found that such CATE predictors can often still correctly rank the average treatment effect among subgroups defined by bins of the predicted effects. These findings support the heuristic that the calibration function $\gamma_0(\tau, \cdot)$ is often approximately monotone as a function of the predictor $\tau$. This heuristic makes extending isotonic calibration to the CATE setting especially appealing since the monotonicity constraint ensures that the calibrated predictions preserve the (non-strict) ranking of the original predictions.
 
Inspired by isotonic calibration, we propose a doubly-robust calibration method for treatment effects, which we refer to as \emph{causal isotonic calibration}. Causal isotonic calibration takes a given predictor trained on some dataset and performs calibration using an independent (or hold-out) dataset.  Mechanistically, causal isotonic calibration first automatically learns uncalibrated regions of the given predictor. Calibrated predictions are then obtained by consolidating individual predictions within each region into a single value using a doubly-robust estimator of the ATE. In addition, we introduce a novel data-efficient variant of calibration which we refer to as cross-calibration. In contrast with the standard calibration approach, \emph{causal isotonic cross-calibration} takes cross-fitted predictors and outputs a single calibrated predictor obtained using all available data. Our methods can be implemented using standard isotonic regression software.  

Let $\tau$ be a given treatment effect predictor assumed, for now, to have been built using an external dataset, and suppose that $\mathcal{D}_n$ is the available calibration dataset. In general, we can calibrate the predictor $\tau$ using regression-based calibration methods by employing an appropriate surrogate outcome for the CATE. For both experimental and observational settings, a surrogate outcome with favorable efficiency and robustness properties is the pseudo-outcome $\chi_0(O)$ defined via the mapping 
\begin{equation}
    \chi_0:o\mapsto\tau_0(w) + \frac{a - \pi_0(w)}{\pi_0(w)[1-\pi_0(w)]} \left[ y - \mu_0(a,w) \right],\label{eqn::pseudooutcome}
\end{equation} with $o:=(w,a,y)$ representing a realization of the data unit. This pseudo-outcome has been used as surrogate for the CATE in previous methods for estimating $\tau_0$, including the DR-learner \citep{luedtke2016statistical,kennedy2020optimal}. If $\chi_0$ were known, an external predictor $\tau$ could be calibrated using $\mathcal{D}_n$ by isotonic regression of the pseudo-outcomes $\chi_0(O_1),\chi_0(O_2),\ldots,\chi_0(O_n)$ onto the calibration sample predictions $\tau(W_1),\tau(W_2),\ldots,\tau(W_n)$. However, $\chi_0$ depends on $\pi_0$ and $\mu_0$, which are usually unknown and must  be estimated.

A natural approach for calibrating treatment effect predictors using isotonic regression is as follows. 
First, define $\chi_n$ as the estimated pseudo-outcome function based on estimates $\mu_n$ and $\pi_n$ derived from $\mathcal{D}_{n}$. Then, a calibrated predictor is given by $\theta_n \circ \tau$, where the calibrator $\theta_n$ is found via isotonic regression as a minimizer over $\mathcal{F}_{iso} := \{\theta:\mathbb{R} \rightarrow \mathbb{R}; \; \theta \text{ is monotone  nondecreasing}\}$ of the empirical least-squares risk function  
\begin{equation*}
\label{eq7:cal}
\theta \mapsto \frac{1}{n}\sum_{i=1}^n\left[\chi_n(O_i) - \theta\circ\tau(W_i)\right]^2\ .
\end{equation*}
However, this optimization problem requires a double use of $\mathcal{D}_{n}$: once, for creating the pseudo-outcomes $\chi_n(O_i)$, and a second time, in the calibration step. This double usage could lead to over-fitting \citep{kennedy2020optimal}, and so we recommend obtaining pseudo-outcomes via sample splitting or cross-fitting. Sample splitting involves randomly partitioning  $\mathcal{D}_{n}$ into $\mathcal{E}_m \cup \mathcal{C}_{\ell}$, with $\mathcal{E}_m$ used to estimate $\mu_0$ and $\pi_0$, and $\mathcal{C}_{\ell}$ used to carry out the calibration step --- see  Algorithm \ref{alg:cic} for details. Cross-fitting improves upon sample splitting by using all available data to estimate $\mu_0$ and $\pi_0$ as well as to carry out the calibration step. Algorithm \ref{alg:cicexternal::crossfit}, outlined in Appendix \ref{appendix::alg}, is the cross-fitted variant of Algorithm \ref{alg:cic}.
  \setlength{\dblfloatsep}{0pt} 
 
 \begin{algorithm}[!htb]
\begin{algorithmic}[1]
\caption{Causal isotonic calibration}
 \label{alg:cic}\vspace{.1in}
\REQUIRE predictor $\tau$, training data $\mathcal{E}_m$, calibration data $\mathcal{C}_{\ell} $ 
\vspace{.05in}
\STATE obtain estimate $\chi_{m}$ of $\chi_0$ using $\mathcal{E}_m$;
\STATE perform isotonic regression to find
\vspace{-.05in}
\[\theta_{n}^* = \argmin_{\theta \in \mathcal{F}_{iso}}  \sum_{i \in \mathcal{I}_{\ell}}[\chi_m(O_{i})-\theta \circ \tau (W_{i})]^2\]
\vspace{-.15in}

with $\mathcal{I}_{\ell}$ the set of indices for observations in $\mathcal{C}_{\ell} \subset \mathcal{D}_n$;
\STATE set $\tau_{n}^* := \theta_{n}^* \circ \tau$.\label{alg::cic::stepminimizer}
\vspace{.05in}
\ENSURE $\tau_{n}^*$
\end{algorithmic}
\vspace{.05in}
\end{algorithm}

In practice, the external dataset used to construct $\tau$ for input into Algorithm~\ref{alg:cic} is likely to arise from a sample splitting approach wherein a large dataset is split in two, with one half used to estimate $\tau$ and the other to calibrate it. This naturally leads to the question of whether there is an approach that fully utilizes the entire dataset for both fitting an initial estimate of $\tau_0$ and calibration. Algorithm \ref{alg:ciccrossfit} describes causal isotonic cross-calibration, which provides a means to accomplish precisely this. In brief, this approach applies Algorithm~\ref{alg:cic} a total of $k$ times on different splits of the data, where for each split an initial predictor of $\tau_0$ is fitted based on the first subset of the data and this predictor is calibrated using the second subset. These $k$ calibrated predictors are then aggregated via a  pointwise median. 
Interestingly, other aggregation strategies, such as pointwise averaging, can lead to uncalibrated predictions \citep{gneiting2013combining,rahaman2020uncertainty}. A computationally simpler variant of Algorithm \ref{alg:ciccrossfit} is given by Algorithm \ref{alg:ciccrossfit::pooled}. In this implementation, a single isotonic regression is performed using the pooled out-of-fold predictions; this variant may also yield more stable performance in finite-samples than Algorithm~\ref{alg:ciccrossfit} --- see Section 2.1.2 of \citet{xu2022calibration} for a related discussion in the context of debiased machine learning.

\begin{algorithm}[!htb]
\begin{algorithmic}[1]
\caption{Causal isotonic cross-calibration (unpooled)}
\label{alg:ciccrossfit}\vspace{.1in}
\REQUIRE dataset $\mathcal{D}_n$, \# of cross-fitting splits $k$
\vspace{.05in}
\STATE partition $\mathcal{D}_n$ into datasets $\mathcal{C}^{(1)},\mathcal{C}^{(2)},\ldots,\mathcal{C}^{(k)}$;
\FOR {$s=1,2,\ldots,k$}
\STATE set $\mathcal{E}^{(s)} := \mathcal{D}_n \backslash \mathcal{C}^{(s)}$;
\STATE get initial predictor $\tau_{n,s}$ of $\tau_0$ using $\mathcal{E}^{(s)}$;
\STATE get calibrated predictor $\tau_{n,s}^*$ via Alg.~\ref{alg:cic} using predictor $\tau_{n,s}$, training data $\mathcal{E}^{(s)}$, and calibration data $\mathcal{C}^{(s)}$;
\ENDFOR
\STATE set ${\tau}_n^*:w\mapsto\normalfont\text{median}\{\tau^*_{n,1}(w),\tau_{n,2}^*(w),\ldots,\tau_{n,k}^*(w)\}$.
 
\ENSURE $\tau_n^*$
\end{algorithmic}
\end{algorithm}

 \begin{algorithm}[!htb]
\begin{algorithmic}[1]
\caption{Causal isotonic cross-calibration (pooled)}
\label{alg:ciccrossfit::pooled}\vspace{.1in}
\REQUIRE   dataset $\mathcal{D}_n$, \# of cross-fitting splits $k$
\vspace{.05in}
\STATE partition $\mathcal{D}_n$ into datasets $\mathcal{C}^{(1)},\mathcal{C}^{(2)},\ldots,\mathcal{C}^{(k)}$;
\FOR {$s = 1,2,\ldots,k$}
\STATE let $j(i)=s$ for each $i\in \mathcal{C}^{(s)}$;
\STATE set $\mathcal{E}^{(s)} := \mathcal{D}_n \backslash \mathcal{C}^{(s)}$;
\STATE get estimate $\chi_{n,s}$ of $\chi_0$ from $\mathcal{E}^{(s)}$;
\STATE get initial predictor $\tau_{n,s}$ of $\tau_0$ from $\mathcal{E}^{(s)}$;
\ENDFOR
\STATE perform isotonic regression using pooled out-of-fold predictions to find
\vspace{-.1in}
\[\theta_n^*=\argmin_{\theta \in \mathcal{F}_{iso}}\sum_{i=1}^n\left[\chi_{n,j(i)}(O_i)-(\theta \circ \tau_{n,j(i)} )(W_i)\right]^2;\]
\vspace{-.15in}
\STATE set $\tau_{n,s}^* := \theta_n^* \circ \tau_{n,s}$ for $s=1,2,\dots, k$;
\STATE set $\tau_n^*:w\mapsto \text{median}\{\tau_{n,1}^*(w), \tau_{n,2}^*(w), \dots, \tau_{n,k}^*(w)\}$.
 
\ENSURE $\tau_n^*$
\end{algorithmic}
\vspace{.05in}
\end{algorithm}

\section{Large-Sample Theoretical Properties}\label{sec:secc5}

We now present theory for causal isotonic calibration. We obtain results for causal isotonic calibration described by Algorithm \ref{alg:cic} applied to a fixed predictor $\tau$.  We also establish MSE guarantees for the calibrated predictor and argue that the proposed calibrator satisfies Properties \ref{property1} and \ref{property2}. We extend our results to the procedure of Algorithm \ref{alg:ciccrossfit}. 

For ease of presentation, we only establish theoretical results for the case where the nuisance estimators are obtained using sample splitting. With minor modifications, our results can be readily extended to cross-fitting by arguing along the lines of \citet{newey2018cross}.
In that spirit, we assume that the available data $\mathcal{D}_n$ is the union of a training dataset $\mathcal{E}_m$ and a calibration dataset $\mathcal{C}_{\ell}$ of sizes $m$ and $\ell$, respectively, with $n = m + \ell$ and $\min\{m, \ell\} \rightarrow \infty$ as $n\rightarrow\infty$. Let $\tau_n^*$ be the calibrated predictor obtained from Algorithm \ref{alg:cic} using $\tau$, $\mathcal{E}_m$ and $\mathcal{C}_{\ell}$ where the estimated pseudo-outcome $\chi_m$ is obtained by substituting estimates $\pi_m$ and $\mu_m$ of $\pi_0$ and $\mu_0$ into \eqref{eqn::pseudooutcome}.

\begin{condition}[bounded outcome support]
\label{assumption::A1}
The $P$-support $\mathcal{Y}$ of $Y$ is a uniformly bounded subset of $\mathbb{R}$.
\end{condition}

\begin{condition}[positivity]
\label{assumption::A2} There exists $\epsilon>0$ such that $P(\epsilon < \pi_{0}(W) < 1-\epsilon)=1$.
\end{condition}

\begin{condition}[independence]
\label{assumption::A3}
Estimators $\pi_m$ and $\mu_m$ do not use any data in $\mathcal{C}_{\ell}$.
\end{condition}

\begin{condition}[bounded range of $\pi_m$, $\mu_m$, $\tau$]
\label{assumption::A4}
There exist $0<\eta,\alpha<\infty$ such that 
$P(\eta<\pi_m(W)<1-\eta)=P(|\mu_m(A,W)|<\alpha)=P(|\tau(W)|<\alpha)=1$ for $m=1,2,\ldots$
\end{condition}
 
\begin{condition} [bounded variation of best predictor]
The function $\theta_0: \mathbb{R} \mapsto \mathbb{R}$ such that $\theta_0 \circ \tau = \gamma_0(\tau, \cdot) $ is of bounded total variation. \label{assumption::A6}
\end{condition}
 
It is worth noting that the initial predictor and its best monotone transformation can be arbitrarily poor CATE predictors. Condition \ref{assumption::A1} holds trivially when outcomes are binary, but even continuous outcomes are often known to satisfy fixed bounds (e.g., physiologic bound, limit of detection of instrument) in applications. Condition \ref{assumption::A2} is standard in causal inference and requires that all individuals have a positive probability of being assigned to either treatment or control. Condition \ref{assumption::A3} follows as a direct consequence of the sample splitting approach, because the estimators are obtained from an independent sample from the data used to carry the calibration step. Condition \ref{assumption::A4} requires that the estimators of the outcome regression and propensity score be bounded; this can be enforced, for example, by threshholding when estimating these regression functions. Condition \ref{assumption::A6}  excludes cases in which the best possible predictor of the CATE given only the initial predictor $\tau$ has pathological behavior, in the sense that it has infinite variation norm as a (univariate) mapping of $\tau$. We stress here that isotonic regression is used only as a tool for calibration, and our theoretical guarantees do not require any monotonicity on components of the data-generating mechanism --- for example, $\gamma_0(\tau,w)$ need not be monotone as a function of $\tau(w)$. 
 
The following theorem establishes the calibration rate of the predictor $\tau_n^*$ obtained using causal isotonic calibration.
\begin{theorem}[$\tau_n^*$ is well-calibrated]
\label{theorem1}
Under Conditions \ref{assumption::A1}--\ref{assumption::A6}, as $n\rightarrow\infty$, it holds that
\[\normalfont\text{CAL}(\tau_n^*)= O_P\left(\ell^{-2/3}  +  \norm{(\pi_m - \pi_0)(\mu_m - \mu_0)}^2   \right).\]
\end{theorem}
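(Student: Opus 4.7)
The plan is to combine a doubly-robust expansion of the pseudo-outcome with the Karush-Kuhn-Tucker / pool-adjacent-violators characterization of the isotonic regression minimizer, then close the argument via a localized modulus-of-continuity bound for empirical processes indexed by bounded monotone functions. Throughout, let $P_\ell$ denote the empirical measure on the calibration fold $\mathcal{C}_\ell$.

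First, I would split $\chi_m(O) - \tau_0(W) = \xi_m(O) + b_m(W)$, where $\xi_m(O) := \chi_m(O) - E[\chi_m(O) \mid W, \mathcal{E}_m]$ has conditional mean zero given $W$ and the training sample, and a direct calculation using $E(Y \mid A, W) = \mu_0(A,W)$ and $P(A=1 \mid W) = \pi_0(W)$ yields the familiar doubly-robust product form
\[b_m(W) = \frac{\pi_0(W) - \pi_m(W)}{\pi_m(W)}[\mu_0(1,W) - \mu_m(1,W)] - \frac{\pi_0(W) - \pi_m(W)}{1 - \pi_m(W)}[\mu_0(0,W) - \mu_m(0,W)].\]
Conditions \ref{assumption::A2} and \ref{assumption::A4} then give $\|b_m\|^2 \leq C\|(\pi_m - \pi_0)(\mu_m - \mu_0)\|^2$ for a constant $C$ depending only on $\epsilon$ and $\eta$.

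Second, I would exploit the self-calibration property of isotonic regression: for any bounded Borel $h:\mathbb{R}\to\mathbb{R}$, the minimizer $\theta_n^*$ satisfies $P_\ell\{(\chi_m - \tau_n^*)\cdot (h\circ \tau_n^*)\} = 0$, since the fitted value on each level set of $\theta_n^*$ equals the sample mean of $\chi_m$ on that set. I would then specialize this identity to the $\sigma(\tau_n^*)$-measurable test function $g_n := \gamma_0(\tau_n^*,\cdot) - \tau_n^*$, which is of the required form $h_n\circ\tau_n^*$. Because $\gamma_0(\tau_n^*,W) = E[\tau_0(W)\mid \tau_n^*(W)]$ is the $L^2(P_W)$-projection of $\tau_0$ onto $\sigma(\tau_n^*)$, iterated expectation yields both $\|g_n\|^2 = \text{CAL}(\tau_n^*)$ and $\langle \tau_0 - \tau_n^*, g_n\rangle_{P_W} = \text{CAL}(\tau_n^*)$. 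Combining the self-calibration identity with the bias decomposition produces the master equation
\[\text{CAL}(\tau_n^*) = (P_W - P_\ell)\{(\chi_m - \tau_n^*)\, g_n\} - \langle b_m, g_n\rangle_{P_W}.\]

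Third, I would bound the two right-hand terms. Cauchy-Schwarz with $\|g_n\| = \sqrt{\text{CAL}(\tau_n^*)}$ gives $|\langle b_m, g_n\rangle_{P_W}| \leq \|b_m\|\sqrt{\text{CAL}(\tau_n^*)}$. For the empirical process term, I would use that the class of bounded monotone functions of $\tau$ has $L^2$-bracketing entropy of order $1/\epsilon$, together with Condition \ref{assumption::A6}, which propagates through the conditional expectation (writing $g_n = E[\theta_0 \circ \tau \mid \sigma(\tau_n^*)] - \tau_n^*$ and decomposing $\theta_0$ into monotone parts) to keep admissible $g_n$'s in a class of uniformly bounded total variation. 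Condition \ref{assumption::A3} renders the calibration fold i.i.d.\ conditional on $\mathcal{E}_m$, so Dudley chaining yields the modulus-of-continuity bound $|(P_W - P_\ell)\{(\chi_m - \tau_n^*) g_n\}| = O_P(\ell^{-1/2}\, \text{CAL}(\tau_n^*)^{1/4})$. Substituting into the master equation produces
\[\text{CAL}(\tau_n^*) \lesssim_P \ell^{-1/2}\,\text{CAL}(\tau_n^*)^{1/4} + \|b_m\|\,\text{CAL}(\tau_n^*)^{1/2},\]
and a two-case peeling argument (comparing the two terms on the right) closes the loop and yields the desired $\text{CAL}(\tau_n^*) = O_P(\ell^{-2/3} + \|b_m\|^2)$.

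The main obstacle is the empirical process step: because both the calibrated predictor $\tau_n^*$ and the multiplier $g_n$ are data-dependent, I need a uniform bound over the joint class of products $\{(\chi_m - \theta\circ\tau)(h\circ\theta\circ\tau)\}$ indexed by monotone $\theta$ and bounded-variation $h$, while sharply exploiting the $L^2$-localization $\|g_n\|^2 = \text{CAL}(\tau_n^*)$ so that the modulus scales like $\text{CAL}(\tau_n^*)^{1/4}$ rather than a constant. Condition \ref{assumption::A6} is essential to keep admissible $g_n$'s in an entropy-controlled class, while Condition \ref{assumption::A3} lets all empirical-process arguments be carried out conditional on the training data, reducing the analysis to a classical isotonic-regression setting with i.i.d.\ observations and a fixed (albeit data-conditional) envelope.
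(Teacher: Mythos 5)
Your proposal is correct and takes essentially the same approach as the paper's proof: you identify the same master decomposition of $\text{CAL}(\tau_n^*)$ into a doubly-robust cross-product bias term and a localized empirical-process term, invoke the same KKT/score-equation property of the isotonic minimizer to replace the population inner product by an empirical one, use Condition~\ref{assumption::A6} together with the projection structure of $\gamma_0(\tau_n^*,\cdot)$ to keep the test function $g_n$ in an entropy-controlled bounded-variation class, and close via a modulus-of-continuity bound for monotone classes combined with peeling. One minor imprecision worth noting: the chaining bound actually yields $\ell^{-1/2}\sqrt{\delta} + \ell^{-1}\delta^{-1}$ rather than just $\ell^{-1/2}\sqrt{\delta}$, so the claimed $O_P(\ell^{-1/2}\,\text{CAL}^{1/4})$ modulus only holds in the regime $\delta\gtrsim\ell^{-1/3}$ relevant after peeling; the paper's argument accounts for this correctly and your peeling step would need to as well.
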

The calibration rate can be expressed as the sum of an oracle calibration rate and the rate of a second-order cross-product bias term involving nuisance estimators. Notably, the causal isotonic calibrator rate can satisfy Property \ref{property1} at the oracle rate $\ell^{-2/3}$ so long as $\norm{(\pi_{m}-\pi_0)(\mu_{m} - \mu_0)}$ shrinks no slower than $\ell^{-1/3}$, which requires that one or both of $\pi_0$ and $\mu_0$ is estimated well in an appropriate sense. 
 If $\pi_0$ is known, as in most randomized experiments, the fast calibration rate of $\ell^{-2/3}$ can be achieved even when $\mu_m$ is inconsistent, thereby providing distribution-free calibration guarantees irrespective of the smoothness of the outcome regression or dimension of the covariate vector. 
When $\pi_0$ is unknown, the oracle rate of $\ell^{-2/3}$ may not be achievable if the propensity score and outcome regression are insufficiently smooth relative to the dimension of the covariate vector \citep{kennedy2020optimal, KennedyMiximax}.

It is interesting to contrast the calibration guarantee in Theorem \ref{theorem1} with existing MSE guarantees for DR-learners \citep{kennedy2020optimal} since, in view of \eqref{eq:calibrationDistortion}, they also provide calibration guarantees. 
While the MSE estimation rates for the CATE depend on the dimension and smoothness of $\tau_0$, the curse of dimensionality for our calibration rates only manifests itself in the doubly-robust cross-remainder term that involves nuisance estimation rates. For instance, when $\ell = m = n/2$, if $\pi_0$ and $\mu_0$ are known to be H\"{o}lder smooth with exponent $\alpha \geq 1$, the calibration rate implied by Theorem \ref{theorem1} with minimax optimal nuisance estimators is, up to logarithmic factors, $\ell^{-2/3} + \ell^{-4\alpha/(2\alpha+d)}$. In contrast, if $\tau_0$ is known to be  H\"{o}lder smooth with exponent $\beta \geq 1$, a minimax optimal estimator of $\tau_0$ is only guaranteed to achieve an MSE, and therefore calibration, rate of $\ell^{-2\beta/(2\beta + d)} + \ell^{-4\alpha/(2\alpha+d)}$ \citep{KennedyMiximax}. When the nuisance smoothness satisfies $\alpha \geq d/4$, causal isotonic calibration can achieve the oracle calibration rate of $\ell^{-2/3}$, whereas a minimax optimal CATE estimator is only guaranteed to achieve the same calibration rate under the stringent condition that the smoothness of $\tau_0$ satisfies $\beta \geq d$.

The following theorem states that the predictor obtained by taking pointwise medians of calibrated predictors is also calibrated.

\begin{theorem}[Pointwise median preserves calibration]
  \label{theoremmedian}  Let $\tau_{n,1}^{*},\tau_{n,2}^{*},\ldots,\tau_{n,k}^{*}$ be predictors, and define pointwise $\tau_n^*(w):=\normalfont\text{median}\{\tau_{n,1}^{*}(w),\tau_{n,2}^{*}(w),\ldots,\tau_{n,k}^{*}(w)\}$.  Then,\vspace{-.12in}
 \[\normalfont\text{CAL}( \tau_n^*) \leq k\sum_{s=1}^k \text{CAL}(\tau_{n,s}^{*})\ ,\]
 where the median operation is defined as in Section \ref{sec::notation}. 
\end{theorem}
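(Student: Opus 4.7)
\emph{Plan.}
My plan combines Jensen's inequality for conditional expectations with the pointwise structure of the median. First, since $\gamma_0(\tau_n^*, W) - \tau_n^*(W) = E[\tau_0(W) - \tau_n^*(W) \mid \tau_n^*(W)]$ by the tower property, Jensen's inequality applied in $L^2(P_W)$ gives
\[\text{CAL}(\tau_n^*) \;=\; \|E[\tau_0 - \tau_n^* \mid \tau_n^*]\|^2 \;\leq\; \|\tau_0 - \tau_n^*\|^2.\]
The same Jensen observation will also be used on each $\tau_{n,s}^*$, so the calibration functions $\gamma_0(\tau_{n,s}^*,\cdot)$ naturally enter the argument.

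Second, I would exploit the defining property of the median: for each $w$, there is an index $S(w)\in\{1,\ldots,k\}$ with $\tau_n^*(w) = \tau_{n,S(w)}^*(w)$, and moreover at least $\lfloor k/2 \rfloor$ of the values $\{\tau_{n,s}^*(w)\}_{s=1}^k$ lie on the side of any reference $r(w)\in\mathbb{R}$ opposite to $\tau_n^*(w)$. This gives the pointwise median inequality
\[(r(w)-\tau_n^*(w))^2 \;\leq\; \sum_{s=1}^k (r(w)-\tau_{n,s}^*(w))^2\]
for $k\geq 2$. Applied with $r(w) = \gamma_0(\tau_n^*, w)$, it yields, after integrating against $P_W$,
\[\text{CAL}(\tau_n^*) \;\leq\; \sum_{s=1}^k \|\gamma_0(\tau_n^*,\cdot) - \tau_{n,s}^*\|^2;\]
the coefficient $k$ that appears in the theorem statement is then a (loose) majorization of the sum and absorbs the constants introduced in the next step.

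Third, I would decompose each summand by the triangle inequality via the calibration function of $\tau_{n,s}^*$:
\[\gamma_0(\tau_n^*,\cdot) - \tau_{n,s}^* \;=\; \bigl[\gamma_0(\tau_n^*,\cdot) - \gamma_0(\tau_{n,s}^*,\cdot)\bigr] + \bigl[\gamma_0(\tau_{n,s}^*,\cdot) - \tau_{n,s}^*\bigr],\]
so that the second bracket contributes exactly $\text{CAL}(\tau_{n,s}^*)$ under the squared $L^2$ norm. The main obstacle is controlling the first bracket, $\|\gamma_0(\tau_n^*,\cdot) - \gamma_0(\tau_{n,s}^*,\cdot)\|^2$, which is the $L^2$ discrepancy between two conditional expectations of $\tau_0$ taken with respect to different $\sigma$-algebras. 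My plan is to use the projection property of conditional expectations to dominate this cross-term by a quantity involving $\text{CAL}(\tau_n^*)$ and the $\text{CAL}(\tau_{n,s}^*)$'s themselves, leading to a self-referential bound that, upon rearrangement, gives the claimed $\text{CAL}(\tau_n^*)\leq k\sum_s\text{CAL}(\tau_{n,s}^*)$ with the factor $k$ arising as the combinatorial price of the $k$-fold median.
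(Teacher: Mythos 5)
Your first two steps are sound and are close in spirit to how the paper begins: the only property of the median that matters is that $\tau_n^*(w)=\tau_{n,S(w)}^*(w)$ for some index $S(w)$ at each $w$ (guaranteed by the order-statistic convention), which immediately gives $\text{CAL}(\tau_n^*)\le\sum_{s=1}^k\norm{\gamma_0(\tau_n^*,\cdot)-\tau_{n,s}^*}^2$; the Jensen bound $\text{CAL}(\tau_n^*)\le\norm{\tau_0-\tau_n^*}^2$ from your first step is correct but never used afterwards. The genuine gap is your third step. After your decomposition you must control the cross term $\norm{\gamma_0(\tau_n^*,\cdot)-\gamma_0(\tau_{n,s}^*,\cdot)}$, and the proposed appeal to ``the projection property of conditional expectations'' leading to a ``self-referential bound'' is not an argument: the $\sigma$-fields $\sigma(\tau_n^*(W))$ and $\sigma(\tau_{n,s}^*(W))$ are not nested, so no tower or projection identity relates the two conditional expectations, and this discrepancy cannot be dominated by the calibration errors at all. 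Concretely, let $W$ be uniform on $\{1,2,3,4\}$, let $\tau_0$ take the values $(0,2,1,3)$ at $w=1,2,3,4$, and take $\tau_{n,1}^*=(1,1,2,2)$ and $\tau_{n,2}^*=(0.5,2.5,0.5,2.5)$: both components are perfectly calibrated, yet the pointwise median (for $k=2$ the paper's convention gives the pointwise minimum) equals $1$ only at $w=2$, where $\gamma_0(\tau_n^*,2)=\tau_0(2)=2$, and equals $2$ only at $w=4$, where $\gamma_0(\tau_n^*,4)=3$. Thus the cross terms are bounded away from zero while every $\text{CAL}(\tau_{n,s}^*)$ vanishes (and here the left-hand side of the claimed inequality equals $1/2$ while the right-hand side is $0$), so no rearrangement of a bound built solely from the $\text{CAL}$ quantities can complete your step; the switch of conditioning between the median and its components is where the entire difficulty is concentrated, not a remainder to be absorbed.

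For comparison, the paper's proof never forms your cross term: using the same selection property, it asserts the pointwise identity $|\gamma_0(\tau_n^*,w)-\tau_n^*(w)|=|\gamma_0(\tau_{n,j_n(w)}^*,w)-\tau_{n,j_n(w)}^*(w)|$, bounds this by $\sum_{s=1}^k|\gamma_0(\tau_{n,s}^*,w)-\tau_{n,s}^*(w)|$, and then applies the $L^2$ triangle inequality followed by Cauchy--Schwarz to obtain the factor $k$. In other words, the paper replaces conditioning on the median's value by conditioning on the selected component's value in a single pointwise step, which is exactly the move your decomposition isolates and which you flag as the main obstacle; your proposal correctly identifies that obstacle but supplies no valid argument for it, so as written it does not constitute a proof.
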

 Under similar conditions, Theorem \ref{theoremmedian} combined with a generalization of Theorem \ref{theorem1} that handles random $\tau$ (see Theorem \ref{theorem1::RandomPredictor} in Appendix \ref{appendix::theorem1Generaliz}) establishes that a predictor $\tau_n^*$ obtained using causal isotonic cross-calibration (Algorithm \ref{alg:ciccrossfit}) has calibration error $\normalfont\text{CAL}( \tau_n^*)$ of order \[  O_P\left(n^{-2/3}+ \max_{1\leq s\leq k} \norm{(\pi_{n,s} - \pi_0)(\mu_{n,s} - \mu_0)}^2   \right)\]as $n\rightarrow\infty$, where $\mu_{n,s}$ and $\pi_{n,s}$ are the outcome regression and propensity score estimators obtained after excluding the $s^{th}$ fold of the full dataset.  In fact, Theorem  \ref{theoremmedian} is valid for any calibrator of the form $\tau_n^*:w\mapsto\tau_{n,s_n(w)}^{*}(w)$, where $s_n(w)$ is any random selector that may depend on the covariate value $w$. This suggests that the calibration rate for the median-aggregated calibrator implied by Theorem \ref{theoremmedian} is conservative as it also holds for the worst-case oracle selector that maximizes calibration error.

We now establish that causal isotonic calibration satisfies Property \ref{property2}, that is, it maintains the predictive accuracy of the initial predictor $\tau$. In what follows, predictive accuracy is quantified in terms of MSE. At first glance, the calibration-distortion decomposition appears to raise concerns that causal isotonic calibration may distort $\tau$ so much that the predictive accuracy of $\tau_n^*$ may be worse than that of $\tau$. This possibility may seem especially concerning given that the ouput of isotonic regression is a step function, so that there could be many $w,w'\in\mathcal{W}$ such that $\tau(w)\not=\tau(w')$ but $\tau_n^*(w)=\tau_n^*(w')$. The following theorem alleviates this concern by establishing that, up to a remainder term that decays with sample size, the MSE of $\tau_n^*$ is no larger than the MSE of the initial CATE predictor $\tau$. A consequence of this theorem is that causal isotonic calibration does not distort $\tau$ so much as to destroy its predictive performance. To derive this result, we leverage that $\tau_n^*$ is in fact a misspecified DR-learner of the univariate CATE function $\gamma_0(\tau, \cdot)$. While isotonic calibrated predictors are calibrated even when $\gamma_0(\tau, \cdot)$ is not a monotone function of $\tau$, we stress that misspecified DR-learners for $\gamma_0(\tau, \cdot)$ are typically uncalibrated. 

 In the theorem below, we define the best isotonic approximation of the CATE given the initial predictor $\tau$ as \[\tau_0^* := \argmin_{\theta \circ \tau: \theta \in \mathcal{F}_{iso}} \norm{\tau_0 - \theta \circ \tau}.\]

\begin{theorem}[Causal isotonic calibration does not inflate MSE much]
 \label{theorem2}
Under Conditions \ref{assumption::A1}--\ref{assumption::A6}, 
\[ \norm{\tau_{n}^* -  \tau_0^*}=O_P \left( \ell^{-1/3}+\norm{(\pi_m - \pi_0)(\mu_m-\mu_0)}\right)\] 
as $n\rightarrow\infty$. As such, as $n\rightarrow\infty$, the inflation in root MSE from causal isotonic calibration satisfies
\begin{align*}
     &\sqrt{\normalfont\text{MSE}(\tau_{n}^*)}-\sqrt{\normalfont\text{MSE}(\tau)}  \le O_P \left( \ell^{-1/3}+ \norm{(\pi_m - \pi_0)(\mu_m-\mu_0)} \right).
\end{align*}
\end{theorem}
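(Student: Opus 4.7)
First I would observe that the MSE-inflation bound follows immediately from the $L^2$ bound on $\tau_n^* - \tau_0^*$. Since the identity map is monotone nondecreasing, $\tau \in \mathcal{F}_{iso}\circ\tau$, and thus $\norm{\tau_0 - \tau_0^*} \le \norm{\tau_0 - \tau} = \sqrt{\normalfont\text{MSE}(\tau)}$ by the defining optimality of $\tau_0^*$. Combined with the triangle inequality $\sqrt{\normalfont\text{MSE}(\tau_n^*)} = \norm{\tau_n^* - \tau_0} \le \norm{\tau_n^* - \tau_0^*} + \norm{\tau_0^* - \tau_0}$, this gives the second conclusion from the first. It therefore suffices to bound $\norm{\tau_n^* - \tau_0^*}$.

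\textbf{Basic inequality and decomposition.} Conditioning on $\mathcal{E}_m$ makes $\chi_m$ deterministic and independent of $\mathcal{C}_\ell$. Since $\tau_n^*$ minimizes the empirical least-squares risk over $\mathcal{F}_{iso}\circ\tau$ and $\tau_0^*$ is a competitor, algebraic rearrangement of $P_\ell(\chi_m - \tau_n^*)^2 \le P_\ell(\chi_m - \tau_0^*)^2$ yields
\[P_\ell(\tau_n^* - \tau_0^*)^2 \le 2\,P_\ell\bigl[(\chi_m - \tau_0^*)(\tau_n^* - \tau_0^*)\bigr],\]
where $P_\ell$ denotes the empirical measure on $\mathcal{C}_\ell$. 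Introducing $\bar\chi_m(w) := E[\chi_m(O)\mid W = w, \mathcal{E}_m]$ and writing $P_\ell = P + (P_\ell - P)$, the right-hand side splits into a DR-bias piece $2P[(\bar\chi_m - \tau_0)(\tau_n^* - \tau_0^*)]$, a projection piece $2P[(\tau_0 - \tau_0^*)(\tau_n^* - \tau_0^*)]$, and an empirical-process piece $2(P_\ell - P)[(\chi_m - \tau_0^*)(\tau_n^* - \tau_0^*)]$.

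\textbf{Controlling the three pieces.} For the projection piece, the $L^2(P_W)$-projection characterization of $\tau_0^*$ onto the closed convex set $\mathcal{F}_{iso}\circ\tau$ yields the variational inequality $P[(\tau_0 - \tau_0^*)(g - \tau_0^*)] \le 0$ for every $g$ in that set, so the piece is nonpositive when $g = \tau_n^*$. For the DR-bias piece, a short computation using $E(A\mid W) = \pi_0(W)$ and $E(Y\mid A, W) = \mu_0(A, W)$ shows that
\[\bar\chi_m(w) - \tau_0(w) = (\pi_m(w) - \pi_0(w))\Bigl[\tfrac{\mu_m(1,w) - \mu_0(1,w)}{\pi_m(w)} + \tfrac{\mu_m(0,w) - \mu_0(0,w)}{1 - \pi_m(w)}\Bigr],\]
which under Conditions \ref{assumption::A2} and \ref{assumption::A4} gives $\norm{\bar\chi_m - \tau_0} \lesssim \norm{(\pi_m - \pi_0)(\mu_m - \mu_0)}$; Cauchy--Schwarz then bounds the DR-bias piece by $2\norm{\bar\chi_m - \tau_0}\cdot\norm{\tau_n^* - \tau_0^*}$. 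For the empirical-process piece, I would invoke that bounded monotone functions on a compact interval have bracketing entropy of order $1/\epsilon$ in $L^2(P)$, giving critical radius $\ell^{-1/3}$. Conditions \ref{assumption::A1}--\ref{assumption::A4} ensure uniform boundedness of $\chi_m$ and a compact argument domain for $\theta$, so a standard localized maximal inequality with peeling (e.g., Theorem 3.4.1 of van der Vaart and Wellner) yields, conditionally on $\mathcal{E}_m$,
\[(P_\ell - P)[(\chi_m - \tau_0^*)(\tau_n^* - \tau_0^*)] = O_P\bigl(\ell^{-1/2}\norm{\tau_n^* - \tau_0^*}^{1/2} + \ell^{-2/3}\bigr),\]
and an analogous bound for the squared-class transfers $P_\ell(\tau_n^* - \tau_0^*)^2$ to $\norm{\tau_n^* - \tau_0^*}^2$ up to remainders of the same order.

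\textbf{Solving and main obstacle.} Setting $r := \norm{\tau_n^* - \tau_0^*}$ and $b := \norm{(\pi_m - \pi_0)(\mu_m - \mu_0)}$, the accumulated bounds give, with probability tending to one, $r^2 \lesssim b\,r + \ell^{-1/2} r^{1/2} + \ell^{-2/3}$; a routine root-solving argument then yields $r = O_P(\ell^{-1/3} + b)$, completing the proof. The hardest step will be the empirical-process argument: carefully executing the peeling over the isotonic cone to extract the sharp $\ell^{-1/3}$ rate while keeping the analysis uniform in the random realization of $\chi_m$ produced by $\mathcal{E}_m$. Because uniform boundedness (via Conditions \ref{assumption::A1}--\ref{assumption::A4}) makes the classical bracketing-entropy bounds for monotone functions directly applicable, this step should reduce to a careful but standard application of the isotonic-regression toolkit; no new empirical-process machinery is required.
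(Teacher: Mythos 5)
Your proof is correct, and it takes a genuinely different route to the same conclusion. The paper works with the first-order (score-equation) characterization of $\theta_n^*$: starting from the fact that the directional derivative of the empirical risk along the one-sided path toward $\theta_0$ is nonnegative, and combining this with the analogous score inequality for the population projection $\theta_0$, it derives directly the bound
\[
\norm{\tau_0^* - \tau_n^*}^2 \le P\bigl[(\tau_0^* - \tau_n^*)(\chi_0 - \chi_m)\bigr] + (P - P_\ell)\bigl[(\tau_0^* - \tau_n^*)(\chi_m - \chi_0)\bigr] + (P_\ell - P)\bigl[(\tau_0^* - \tau_n^*)(\tau_n^* - \chi_0)\bigr],
\]
in which the \emph{population} $L^2$ norm already appears on the left. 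Your ``basic inequality'' approach --- plugging the competitor $\tau_0^*$ into the empirical risk minimized by $\tau_n^*$ --- instead yields the empirical norm $P_\ell(\tau_n^* - \tau_0^*)^2$ on the left, forcing one additional empirical-process term to transfer to the population norm (you flagged this honestly as the ``analogous bound for the squared class''). You also center the decomposition on $\tau_0^*$ and the conditional mean $\bar\chi_m$ rather than on the true pseudo-outcome $\chi_0$ as the paper does. Both routes share the same essential ingredients --- the convex-projection inequality $P[(\tau_0 - \tau_0^*)(\tau_n^* - \tau_0^*)] \le 0$, the doubly-robust cross term $\norm{\bar\chi_m - \tau_0} \lesssim \norm{(\pi_m - \pi_0)(\mu_m - \mu_0)}$, the $\sqrt{\delta}$ entropy integral for bounded monotone classes, and a peeling argument --- and your reduction of the MSE bound to the $L^2$ bound is identical to the paper's final step. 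What the score-equation route buys is a tighter one-pass derivation with fewer empirical-process terms; what your basic-inequality route buys is greater generality (it works for any empirical risk minimizer over a convex class, without needing to parameterize perturbation directions) and perhaps greater familiarity to a nonparametric-regression audience. One small bonus: your second-order expansion of $\bar\chi_m - \tau_0$ is written with consistent signs, whereas the paper's displayed intermediate formula has a minor sign typo (inconsequential once norms are taken).
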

A similar MSE bound can be established for causal isotonic cross-calibration as defined in Algorithm \ref{alg:ciccrossfit}. 
 
\section{Simulation Studies} \label{sec:secc6}

\subsection{Data-Generating Mechanisms}

We examined the behavior of our proposal under two data-generating mechanisms. The first mechanism (Scenario 1) includes a binary outcome whose conditional mean is an additive function (on the logit scale) of non-linear transformations of four confounders with treatment interactions. The second mechanism (Scenario 2) includes instead a continuous outcome with conditional mean linear on covariates and treatment interactions, with more than 100 covariates of which only 20 are true confounders. In both scenarios, the propensity score follows a logistic regression model. All covariates were independent and uniformly distributed on $(-1,+1)$. Sample sizes $1,000$, $2,000$, $5,000$ and $10,000$ were considered. Further details are given in Appendix \ref{subsecApp:simdatainfo}.

\subsection{CATE Estimation}

We employed the DR-learner algorithm, as outlined by \citet{kennedy2020optimal}, in combination with different supervised learning algorithms to generate uncalibrated predictors of the CATE. In Scenario 1, to estimate the CATE, we implemented gradient-boosted regression trees (GBRT) with maximum depths equal to 2, 5, and 8 \citep{chen2016xgboost}, random forests (RF) \citep{breiman2001random}, generalized linear models with lasso regularization (GLMnet)  \citep{friedman2010regularization}, generalized additive models (GAM) \citep{wood2017introducing}, and multivariate adaptive regression splines (MARS) \citep{friedman1991multivariate}. In Scenario 2, we implemented RF, GLMnet, and a combination of variable screening with lasso regularization followed by GBRT with maximum depth determined via cross-validation. We used the implementation of these estimators found in R package \texttt{sl3} \citep{coyle2021sl3-rpkg}. Causal isotonic cross-calibration was implemented using the variant outlined in Algorithm \ref{alg:ciccrossfit::pooled}. Further details are given in Appendix \ref{subsecApp:impcic}.

\subsection{Performance Metrics} \label{subseq:perfm}

We evaluated the performance of each causal isotonic cross-calibrated predictor relative to its corresponding uncalibrated predictor using three metrics: the calibration measure defined in \eqref{eq2:cal}, MSE, and the calibration bias within bins defined by the first and last prediction deciles. The calibration bias within bins is given by the measure in \eqref{eq1:cal} standardized by the probability of falling within each bin. For each simulation iteration, the metric was estimated empirically using an independent sample $\mathcal{V}$ of size $n_{\mathcal{V}}=10^4$. These metric estimates were then averaged across 1000 simulations. Details on these metrics are provided in Appendix \ref{subsecApp:perfmet}.

\subsection{Simulation Results}

Results from Scenario 1 are summarized in Figure \ref{simfig:scenario1}. The predictors based on GLMnet and GAM happened to be well-calibrated, and so, causal isotonic calibration did not lead to substantial improvements in calibration error. In contrast, causal isotonic calibration of RF, MARS, and GBRT substantially decreased its calibration error, regardless of tree depth and sample size. In terms of MSE, calibration improved the predictive performance of RF, MARS, GBRT, and preserved the performance of GLMnet and GAM. The calibration bias within bins of prediction was generally smaller after calibration, with a more notable improvement on MARS, RF, and GBRT --- see Table \ref{tabApp:bwb1} in Appendix \ref{appendix::simResults}.

 Results from Scenario 2 are summarized in Figure \ref{simfig:scenario2}. The predictors based on RF and GBRT with GLMnet screening were poorly calibrated, and causal isotonic calibration substantially reduced their calibration error. Calibration did not noticeably change the already small calibration error of the GLMnet predictions; however, calibration substantially reduced the calibration error within quantile bins of its predictions --- see Table \ref{tabApp:bwb2} in Appendix \ref{appendix::simResults}. Finally, with respect to MSE, causal isotonic calibration improved the performance of RF and GBRT with variable screening, and yielded similar performance to GLMnet. 
 
 In Figure \ref{figApp:externalcal} of Appendix \ref{appendix::simResults}, we compared calibration performance using hold-out sets to cross-calibration and found substantial improvements in MSE and calibration by using cross-calibration.

 \begin{figure}[H]
\vskip 0.2in
\begin{center}
\begin{subfigure}[H]{0.45\linewidth}
\centerline{\includegraphics[width=\columnwidth]{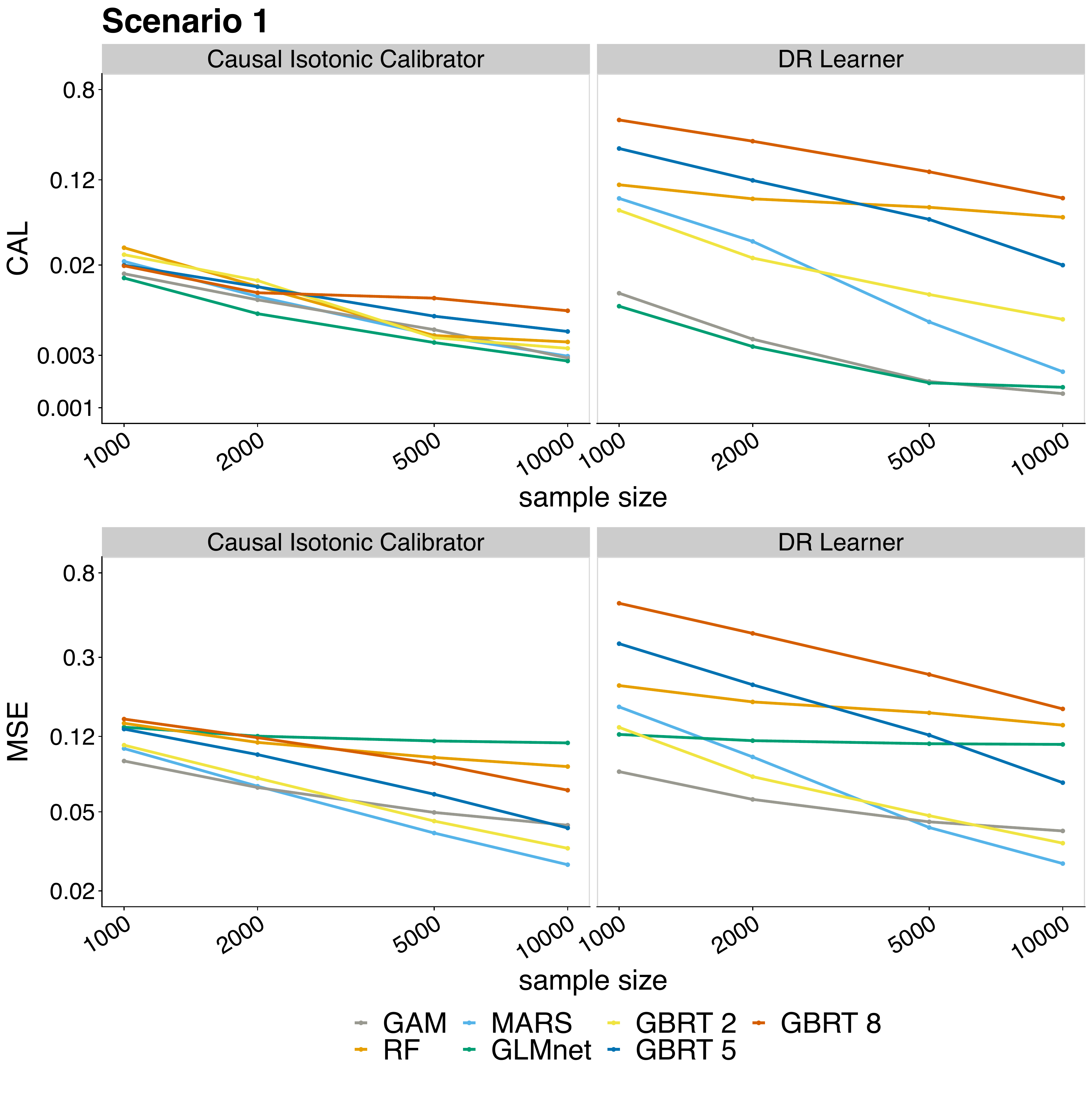}}
\caption{Calibration error and MSE in Scenario 1. The panels show the calibration error (top) and MSE (bottom) using the calibrated (left) and uncalibrated (right) predictors as a function of sample size. Both calibration error and the MSE were standardized by $\text{Var}\left(Y(1) - Y(0)\right)$. The y-axes and x-axis are on a log scale.}
\label{simfig:scenario1}
\end{subfigure}\hspace{0.5cm}
\begin{subfigure}[H]{0.45\linewidth}
\centerline{\includegraphics[width=\columnwidth]{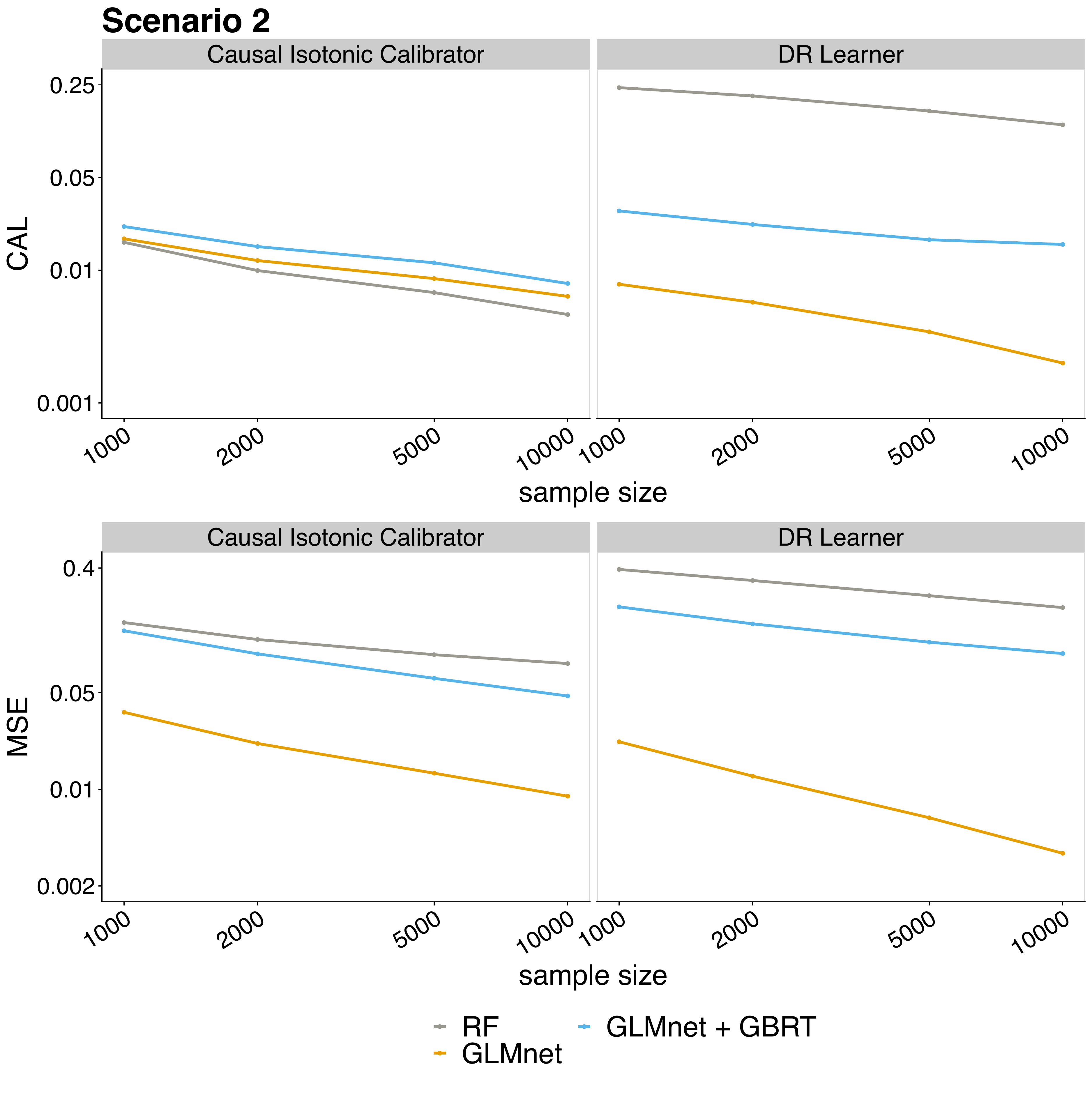}}
\caption{Calibration error and MSE in Scenario 2. The panels show the calibration error (top) and MSE (bottom) using the calibrated (left) and uncalibrated (right) predictors as a function of sample size. Both calibration error and the MSE were standardized by $\text{Var}\left(Y(1) - Y(0)\right)$. The y-axes and x-axis are on a log scale.}
\label{simfig:scenario2}
\end{subfigure} 

\end{center}
\vskip -0.2in
\end{figure}

\section{Conclusion} \label{sec:secc7}

In this work, we proposed causal isotonic calibration as a novel method to calibrate treatment effect predictors. In addition, we established that the pointwise median of calibrated predictors is also calibrated. This allowed us to develop a data-efficient variant of causal isotonic calibration using cross-fitted predictors, thereby avoiding the need for a hold-out calibration dataset. Our proposed methods guarantee that, under minimal assumptions, the calibration error defined in \eqref{eq1:cal} vanishes at a fast rate of $\ell^{-2/3}$ with little or no loss in predictive power, where $\ell$ denotes the number of observations used for calibration. This property holds regardless of how well the initial predictor $\tau$ approximates the true CATE function. To our knowledge, our method is the first in the literature to directly calibrate CATE predictors without requiring trial data or parametric assumptions. Potential applications of our method include data-driven decision-making with strong robustness guarantees. In future work, it would be interesting to study whether pairing causal isotonic cross-calibration with conformal inference \citep{lei2021conformal} leads to improved ITE prediction intervals, and whether causal isotonic calibration and shape-constrained inference methods \citep{westling2020unified} can be used to construct confidence intervals for $\gamma_0(\tau_n^*, \cdot)$.

Our method has limitations. Its calibration guarantees require that either $\mu_0$ or $\pi_0$ be estimated sufficiently well. Flexible learning methods can be used to satisfy this condition. If $\pi_0$ is known, this condition can be trivially met. Hence, our method can be readily used to calibrate CATE predictors and characterize HTEs in clinical trials. For proper calibration, our method requires all confounders to be measured and adjusted for. In future work, it will be important to study CATE calibration in the context of unmeasured confounding. Our strategy could be adapted to construct calibrators for general learning tasks, including E-learning of the conditional relative risk \citep{JiangEntropy, LuedtkeEntropyDiscussion}, proximal causal learning \citep{tchetgen2020introduction, sverdrup2023proximal}, and instrumental variable-based learning \citep{okui2012doubly, syrgkanis2019machine}. 

In simulations, we found that causal isotonic cross-calibration led to well-calibrated predictors without sacrificing predictive performance; benefits were especially prominent in high-dimensional settings and for tree-based methods. This is of particularly high relevance given that regression trees have become popular for CATE estimation, due to both their flexibility \citep{athey2016recursive} and interpretability \citep{lee2020causal}. We also found that cross-calibration substantially improved the MSE of the calibrated predictor relative to hold-out set approaches. In some cases, cross-calibration even improved upon the MSE of the uncalibrated predictor.  

Though our focus was on treatment effect estimation, our theoretical arguments can be readily adapted to provide guarantees for isotonic calibration in regression and classification problems. 
Hence, we have provided an affirmative answer to the open question of whether it is possible to establish distribution-free calibration guarantees for isotonic calibration \citep{gupta2022post}.

\noindent\textbf{Acknowledgements.} Research reported in this publication was supported by NIH grants DP2-LM013340 and  R01-HL137808. The content is solely the responsibility of the authors and does not necessarily represent the official views of the funding agencies.

\newpage
\bibliography{bibliography.bib}
 
 \onecolumn
 \appendix

\section{Implementation of algorithms in R}

R code implementing causal isotonic calibration with user-supplied (cross-fitted) nuisance estimates and predictions is provided in the Github package \textit{causalCalibration} and can be found at \hyperlink{https://github.com/Larsvanderlaan/causalCalibration}{https://github.com/Larsvanderlaan/causalCalibration}.

\section{Algorithm for causal isotonic calibration with cross-fitted nuisance estimates}
\label{appendix::alg}
\begin{algorithm}[H]
\begin{algorithmic}[1]
\caption{Causal isotonic calibration (cross-fitted nuisances)}
\label{alg:cicexternal::crossfit}\vspace{.1in}
\REQUIRE  predictor $\tau$, dataset $\mathcal{D}_n$, \# of cross-fitting splits $k$
\vspace{.05in}
\STATE partition $\mathcal{D}_n$ into datasets $\mathcal{T}^{(1)},\mathcal{T}^{(2)},\ldots,\mathcal{T}^{(k)}$;
\FOR {$s = 1,2,\ldots,k$}
\STATE let $j(i)=s$ for each $i\in \mathcal{T}^{(s)}$;
\STATE get estimate $\chi_{n,s}$ of $\chi_0$ from $\mathcal{D}_n \backslash \mathcal{T}^{(s)}$;
\ENDFOR
\STATE perform isotonic regression using pooled out-of-fold estimates to find
\vspace{-.1in}
\[\theta_n^*=\argmin_{\theta \in \mathcal{F}_{iso}} \frac{1}{n}\sum_{i=1}^n\left[\chi_{n,j(i)}(O_i)-(\theta \circ \tau )(W_i)\right]^2;\]
\vspace{-.15in}
\STATE set $\tau_{n}^* := \theta_n^* \circ \tau$;
\ENSURE $\tau_n^*$
\end{algorithmic}
\vspace{.05in}
\end{algorithm}

\section{Technical proofs}
 
Unless stated otherwise, the function $\tau_n^*$ denotes a calibrated predictor obtained using Algorithm \ref{alg:cic} with a predictor $\tau$, training dataset $\mathcal{E}_m$, and calibration dataset $\mathcal{C}_{\ell} = \mathcal{D}_n \backslash \mathcal{E}_m$ as described in Section \ref{sec:secc5}. 

\subsection{Notation \& definitions}

Let $\mathcal{T} := \{\tau(w): w \in \mathcal{W}\}$ denote the range of the predictor $\tau$, which is a bounded subset of $\mathbb{R}$ by Condition \ref{assumption::A4}. We redefine  $\mathcal{F}_{iso} \subset \{\theta: \mathcal{T} \rightarrow \mathbb{R}; \theta \text{ is monotone nondecreasing}\}$ to denote the family of nondecreasing functions on $\mathcal{T}$ uniformly bounded by \[B  :=  \sup_{ m \in \mathbb{N}}\sup_{\mathcal{E}_m}\sup_{o \in \mathcal{O}}\left[|\chi_0(o)| + |\chi_m(o)|\right],  \] where the second supremum is over all possible realizations of the training dataset $\mathcal{E}_m$. We necessarily have that $B$ is nonrandom and finite by Lemma \ref{lem:lem2}. Redefining $\mathcal{F}_{iso}$ to be bounded allows us to directly apply certain maximal inequalities for empirical processes indexed by $\mathcal{F}_{iso}$. Since the isotonic regression estimator is obtained by locally averaging the pseudo-outcome $\chi_m$ \citep{barlow1972isotonic}, the unconstrained isotonic regression solution satisfies this bound and falls in the interior of this class almost surely. Moreover, $\mathcal{F}_{iso}$ is a convex subset of the space of monotone  nondecreasing functions. Let  $\mathcal{F}_{TV} \subset \{\theta: \mathbb{R} \rightarrow \mathbb{R}; \theta \text{ is of bounded variation}\}$ denote the space of functions with total variation uniformly bounded by three times the total variation of the function $\theta_0$ where $\theta_0$ is as in condition \ref{assumption::A6}. Additionally, let
$\mathcal{F}_{\tau,iso} := \left\{ \theta \circ \tau :\mathcal{W} \rightarrow \mathbb{R}; \theta \in \mathcal{F}_{iso} \right\}$ be the family of functions obtained by composing nondecreasing functions in $\mathcal{F}_{iso}$ with $\tau$, and let $\mathcal{F}_{\tau,TV}:= \{\theta \circ \tau: \mathcal{W}\rightarrow \mathbb{R} ; \theta \in \mathcal{F}_{TV}\}$ be the family of functions obtained by composing functions in $\mathcal{F}_{TV}$ with $\tau$. Let $\mathcal{F}_{Lip,m} := \left\{ o\mapsto [ \tau_2(w) - \tau_1(w) ][\chi_m(o)- \tau_2(w) ]: \mathcal{O} \rightarrow \mathbb{R}; \tau_2 \in \mathcal{F}_{\tau,TV}, \tau_1 \in \mathcal{F}_{\tau,iso} \right\}$, where $\chi_m$ is the estimated pseudo-outcome function. Finally, for a function class $\mathcal{F}$, let $N(\epsilon,\mathcal{F},L_2(P))$ denote the $\epsilon-$covering number \citep{van1996weak} of $\mathcal{F}$ and define the uniform entropy integral of $\mathcal{F}$ by  
\begin{equation*}
\mathcal{J}(\delta,\mathcal{F}):= \int_{0}^{\delta} \sup_{Q}\sqrt{\log N(\epsilon,\mathcal{F},L_2(Q))}\,d\epsilon\ ,
\end{equation*}
where the supremum is taken over all discrete probability distributions $Q$. In contrast to the definition provided in \citet{van1996weak}, we do not define the uniform entropy integral relative to an envelope function for the function class $\mathcal{F}$. We can do this since all function classes we consider are uniformly bounded. Thus, any uniformly bounded envelope function will only change the uniform entropy integral as defined in \citet{van1996weak} by a constant. 

In the results below, we will use the following empirical process notation: for a $P-$measurable function $f$, we denote $\int f(o)dP(o)$ by $Pf$, and so, letting $P_{\ell}$ denote the empirical distribution of $\mathcal{C}_{\ell}$, $P_{\ell}f$ equals $\frac{1}{\ell}\sum_{i \in \mathcal{I}_{\ell}} f(O_i)$ with $\mathcal{I}_{\ell}$ indexing observations of $\mathcal{C}_{\ell} \subset \mathcal{D}_n$. We also let $\norm{f}_P^2 := Pf^2$; to simplify notation, we omit the dependency in $P$ and use $\norm{f}^2$ instead of $\norm{f}_P^2$. Finally, for two quantities $x$ and $y$, we use the expression  $x \lessapprox y$ to mean that $x$ is upper bounded by $y$ times a universal constant that may only depend on global constants that appear in conditions \ref{assumption::A1}-\ref{assumption::A6}

\subsection{Technical lemmas}

The following lemma is a key component of our proof of Theorem \ref{theorem1}.
\begin{lemma}
\label{lem:lem1}
For a calibrated predictor $\tau_n^*$ obtained using Algorithm \ref{alg:cic}, and any real-valued function $r$, we have that
\begin{equation}
\label{eq5:cal}
  \sum_{i \in \mathcal{I}_{\ell}} [r \circ \tau_n^*(W_i)]\left[\tau_n^*(W_i) - \chi_m(O_i)\right] = 0\ .
\end{equation}
\end{lemma}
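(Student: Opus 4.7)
The plan is to deduce the identity from the well-known block-averaging characterization of the isotonic least-squares solution. Since $\tau_n^* = \theta_n^* \circ \tau$ where $\theta_n^*$ is the pool-adjacent-violators (PAV) solution of the isotonic regression of $\chi_m(O_i)$ on $\tau(W_i)$ over $i \in \mathcal{I}_\ell$, I would first recall that this solution partitions $\mathcal{I}_\ell$ into maximal contiguous blocks (with respect to the ordering induced by $\tau$) on which $\theta_n^* \circ \tau$ is constant, and that on each block $B$ with common fitted value $c_B$, one has $c_B = \frac{1}{|B|}\sum_{i \in B}\chi_m(O_i)$, or equivalently,
\begin{equation*}
\sum_{i \in B}\left[\tau_n^*(W_i) - \chi_m(O_i)\right] = 0.
\end{equation*}
This is the standard KKT/block-averaging property for isotonic regression under a convex least-squares criterion (see, e.g., \citet{barlow1972isotonic} and \citet{groeneboom1993isotonic}); it holds here because the unrestricted PAV solution takes values in $[-B,B]$ and hence lies in the interior of $\mathcal{F}_{iso}$, so the uniform bound in the redefinition of $\mathcal{F}_{iso}$ imposes no binding constraint.

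Having established the block characterization, I would partition the sum over $\mathcal{I}_\ell$ according to the blocks of $\theta_n^*$. Since $\tau_n^*(W_i) = c_B$ is constant on each block $B$, the function $r \circ \tau_n^*$ evaluated at $W_i$ equals $r(c_B)$, which is also constant on $B$. Therefore,
\begin{equation*}
\sum_{i \in \mathcal{I}_\ell}[r \circ \tau_n^*(W_i)]\left[\tau_n^*(W_i) - \chi_m(O_i)\right] = \sum_{B} r(c_B)\sum_{i \in B}\left[\tau_n^*(W_i) - \chi_m(O_i)\right] = \sum_{B} r(c_B)\cdot 0 = 0,
\end{equation*}
which is the desired conclusion. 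Note that no measurability or regularity assumption on $r$ is required since the outer sum is finite and $r$ is only evaluated at the finitely many constants $\{c_B\}$.

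The only substantive step is the invocation of the block-averaging characterization; everything else is bookkeeping. One minor subtlety I would address is the choice of representative of $\theta_n^*$ (the isotonic solution is nonunique away from observed $\tau(W_i)$), but this does not matter here since the sum only involves $\tau_n^*(W_i)$ at the calibration sample points, where the fitted values are uniquely determined by PAV; the c\`{a}dl\`{a}g convention from Section~\ref{sec::notation} is used only to define $\theta_n^*$ off the sample.
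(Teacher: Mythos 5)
Your proof is correct, but it takes a genuinely different route from the paper's. The paper argues from scratch: it writes $\theta_n^*$ as a step function with jumps $\{u_j\}$, perturbs along each one-sided direction $\varepsilon\mapsto \theta_n^*+\varepsilon\,\mathbf{1}(\cdot\ge \bar u_j)$ (permissible in both directions for small $|\varepsilon|$ since $\min_j a_j>0$), and uses first-order optimality of $R_n$ at $\theta_n^*$ to obtain $\sum_{i\in\mathcal{I}_\ell}\mathbf{1}(\tau(W_i)\ge \bar u_j)[\tau_n^*(W_i)-\chi_m(O_i)]=0$ for every jump point; it then closes by observing that any $r\circ\tau_n^*$ lies in the linear span of $\{\mathbf{1}(\tau(\cdot)\ge u_j)\}_j\cup\{1\}$ and the score equations are linear. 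You instead invoke the classical block-averaging (level-set) characterization of the PAV solution --- that on each maximal level set $B$ of $\theta_n^*\circ\tau$ the fitted value equals the block mean of $\chi_m$ --- and then simply factor $r(c_B)$ out of each block's contribution. These are two faces of the same KKT conditions (the upper-level-set indicators and the block indicators span the same space of functions constant on level sets), so the content is equivalent; your route is shorter and more modular because it cites a known fact, whereas the paper's is self-contained and derives that fact via directional derivatives. You also correctly flag, as the paper does, the point that the unconstrained PAV solution already satisfies the uniform bound $B$ (so the truncation in the redefinition of $\mathcal{F}_{iso}$ is never binding), and you correctly note that the c\`adl\`ag representative convention is irrelevant at the observed $\tau(W_i)$.
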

 
\begin{proof}
 Note that $\tau_n^*(w)$ can be expressed pointwise for any $w \in \mathcal{W}$ as $\theta_n^* \circ \tau(w) = a_0 + \sum_{j=1}^J a_j 1(\tau(w)\geq u_j)$ for a piecewise constant function $\theta_n^*$ determined by coefficients $\{a_j\}_{j=0}^J$ and jump points $\{u_j\}_{j=1}^J$ \citep{barlow1972isotonic}. By monotonicity, we necessarily have $a_0 \in \mathbb{R}$ and $\{a_j\}_{j = 1}^{J}$ are positive coefficients. 
 
 Let $R_n(\theta):=  \sum_{i \in \mathcal{I}_{\ell}} [\theta \circ \tau(W_i) - \chi_m(O_i)]^2$ denote the least-squares risk used in the isotonic regression step. Fix an arbitrary jump point $\bar{u}_j$, and let $\xi_n : \mathbb{R}^2\rightarrow\mathbb{R}$ denote the function $\xi_n(\varepsilon, h):= \theta_n^*(h) + \varepsilon 1 (h \geq \bar{u}_j)$. Note that $\delta>0$ can be chosen to be sufficiently small that, for all $|\varepsilon|\le \delta$, $h\mapsto \xi_n(\varepsilon, h)$ is  nondecreasing --- for instance, $\delta = \min\{a_j\}_{j= 1}^{J}$ suffices. Thus, for sufficiently small $\delta > 0$, $h\mapsto \xi_n(\varepsilon, h)$ lies in the space of monotone  nondecreasing function for all $|\varepsilon|\le \delta$. In a slight abuse of notation, we let $R_n(\xi_n(\varepsilon)) :=\sum_{i \in \mathcal{I}_{\ell}}[\xi_n(\varepsilon, \tau(W_i)) - \chi_m(O_i)]^2$ and $R_n(\xi_n(-\varepsilon)) :=\sum_{i \in \mathcal{I}_{\ell}}[\xi_n(-\varepsilon, \tau(W_i)) - \chi_m(O_i)]^2$. 

Now, because $\theta_n^*$ minimizes $\theta \mapsto R_n(\theta)$ over the space of monotone  nondecreasing functions, for all $\varepsilon \geq 0$, it holds that both
$R_n(\xi_n(\varepsilon)) - R_n(\tau_n^*)\geq 0$ and $R_n(\xi_n(-\varepsilon)) - R_n(\tau_n^*)\geq 0$. Moreover, when $\varepsilon = 0$, $R_n(\xi_n(0)) - R_n(\tau_n^*)  = 0$. Therefore, if $\varepsilon$ is sufficiently close to 0, the derivative with respect to $\varepsilon$ of $R_n(\xi_n(\varepsilon)) - R_n(\tau_n^*)$ must be non-negative, and $R_n(\xi_n(-\varepsilon)) - R_n(\tau_n^*)$ must be non-positive. Hence, it must be true that
\[
\frac{d}{d\varepsilon}[R_n(\xi_n(\varepsilon)) - R_n(\theta_n^*)]\Big |_{\varepsilon = 0}\geq 0\mbox{\ \ and\ \ }
\frac{d}{d\varepsilon}[R_n(\xi_n(-\varepsilon)) - R_n(\theta_n^*)]\Big |_{\varepsilon = 0}\leq 0\ .
\]
This, in turn, implies that 
\[
2\sum_{i \in \mathcal{I}_{\ell}} 1(\tau(W_i) \geq \bar{u}_j)\left[\tau_n^*(W_i) - \chi_m(O_i)\right]\geq 0 \mbox{\ \ and\ \ }
2\sum_{i \in \mathcal{I}_{\ell}} 1(\tau(W_i) \geq \bar{u}_j)\left[\tau_n^*(W_i) - \chi_m(O_i)\right]\leq 0\ ,
\]
and so, it follows that $\sum_{i \in \mathcal{I}_{\ell}}  1(\tau(W_i) \geq \bar{u}_j)\left[\tau_n^*(W_i) - \chi_m(O_i)\right]= 0$.
Because the jump point $\bar{u}_j$ was arbitrary, we have that for all functions of the form $s(w) = b_0 + \sum_{j=1}^J b_j 1(\tau(w) \geq u_j)$ with coefficients $\{b_j\}_{j = 0}^J$, we can show that 
\begin{equation*}
\sum_{i \in \mathcal{I}_{\ell}}  s(W_i)\left[\tau_n^*(W_i) - \chi_m(O_i)\right]  = 0
\end{equation*}
by taking linear combinations of $1(\tau(w) \geq u_j)$ and noting that the score equations are linear in $s$. The main result of this lemma follows from the fact that, since both $\tau_n^*$ and $r \circ \tau_n^*$ can be expressed in this form, for any real-valued function $r$, we have that
\[
\sum_{i \in \mathcal{I}_{\ell}} r \circ \tau_n^*(W_i)\left[\tau_n^*(W_i) - \chi_m(O_i)\right] = 0\ .  
\] 
\end{proof}

\begin{lemma}
\label{lem:lem2}
Conditions \ref{assumption::A1}, \ref{assumption::A2} and \ref{assumption::A4} imply that the function classes $\mathcal{F}_{iso}$, $\mathcal{F}_{\tau,TV}$, $\mathcal{F}_{\tau,iso}$ and $\mathcal{F}_{Lip,m}$ are bounded. 
\end{lemma}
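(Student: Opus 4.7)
The plan is to verify boundedness of each of the four function classes in turn, with most of the work going into showing the constant $B$ is finite and then showing that the total-variation class inherits a sup-norm bound. First, I would establish that both $\chi_0$ and $\chi_m$ are uniformly bounded on $\mathcal{O}$, so that $B<\infty$ and $\mathcal{F}_{iso}$ and $\mathcal{F}_{\tau,iso}$ are bounded by construction. For $\chi_0$, Condition \ref{assumption::A1} forces $\mu_0(a,w)\in[-C,C]$ for some finite $C$, so $\tau_0=\mu_0(1,\cdot)-\mu_0(0,\cdot)$ is bounded by $2C$; Condition \ref{assumption::A2} bounds the weights $\{a-\pi_0(w)\}/\{\pi_0(w)[1-\pi_0(w)]\}$ by $1/[\epsilon(1-\epsilon)]$; combining these gives a uniform bound on $|\chi_0(o)|$. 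For $\chi_m$, the same argument applies using Condition \ref{assumption::A4} to bound $|\mu_m|\le\alpha$ and $\pi_m\in(\eta,1-\eta)$, with the bound holding uniformly in $m$ and in the realization of $\mathcal{E}_m$. Consequently $B$ is a finite constant depending only on the global constants in Conditions \ref{assumption::A1}, \ref{assumption::A2}, \ref{assumption::A4}, and the conclusion for $\mathcal{F}_{iso}$ is immediate; the conclusion for $\mathcal{F}_{\tau,iso}$ follows because every element is a composition $\theta\circ\tau$ with $\theta\in\mathcal{F}_{iso}$.

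For $\mathcal{F}_{\tau,TV}$, the range $\mathcal{T}$ of $\tau$ is a bounded subset of $\mathbb{R}$ by Condition \ref{assumption::A4}. A univariate function with total variation at most $3\,\mathrm{TV}(\theta_0)$ is determined in sup-norm up to its value at a single anchor point, so the class becomes uniformly bounded once we pin an anchor. The natural choice is to take the anchor at a boundary point of $\mathcal{T}$ (or at $0$) and note that in every use within the proof of Theorem \ref{theorem1} the relevant elements of $\mathcal{F}_{\tau,TV}$ arise as either $\theta_0\circ\tau=\gamma_0(\tau,\cdot)$ (bounded by the previous step because $\tau_0$ is bounded) or as differences with isotonic calibrators (already in $\mathcal{F}_{\tau,iso}$). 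Enforcing the resulting sup-norm bound as part of the definition of $\mathcal{F}_{TV}$ is without loss of generality and yields uniform boundedness of $\mathcal{F}_{\tau,TV}$ by a constant depending only on $B$ and $\mathrm{TV}(\theta_0)$.

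Finally, every element of $\mathcal{F}_{Lip,m}$ is a pointwise product of two differences, $[\tau_2(w)-\tau_1(w)]$ and $[\chi_m(o)-\tau_2(w)]$, with $\tau_1\in\mathcal{F}_{\tau,iso}$ and $\tau_2\in\mathcal{F}_{\tau,TV}$. Since $\chi_m$, $\tau_1$, and $\tau_2$ are each uniformly bounded by the previous steps, the triangle inequality together with a direct multiplicative bound shows that $\mathcal{F}_{Lip,m}$ is uniformly bounded by a constant of order $B^{2}$ up to the total-variation bound.

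The only nontrivial point is the one raised in the second paragraph: a bounded total-variation constraint by itself does \emph{not} imply pointwise boundedness, and one must either (i) bake an additional anchoring/sup-norm condition into the definition of $\mathcal{F}_{TV}$ or (ii) exploit that in every downstream application the relevant elements of $\mathcal{F}_{\tau,TV}$ inherit such a bound from the isotonic regression construction and from $\theta_0\circ\tau=\gamma_0(\tau,\cdot)$ being bounded by $B$. All remaining steps are routine applications of Conditions \ref{assumption::A1}, \ref{assumption::A2}, and \ref{assumption::A4}.
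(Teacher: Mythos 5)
Your proof follows the same route as the paper: use Conditions~\ref{assumption::A1}, \ref{assumption::A2}, and \ref{assumption::A4} to uniformly bound $\chi_0$ and $\chi_m$, whence $B<\infty$ and $\mathcal{F}_{iso}$, $\mathcal{F}_{\tau,iso}$ are bounded; argue $\mathcal{F}_{\tau,TV}$ is bounded; and conclude $\mathcal{F}_{Lip,m}$ is bounded as a pointwise product of uniformly bounded factors. Where you are actually \emph{more} careful than the paper is the $\mathcal{F}_{TV}$ step: the paper simply writes ``functions of finite variation are bounded,'' but, as you note, a total-variation constraint alone gives no uniform sup-norm control (a constant function of arbitrary magnitude has total variation zero), so the class as literally defined in the preamble is not uniformly bounded. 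Your two proposed remedies --- anchoring the class by fixing a value at a reference point, or observing that every element of $\mathcal{F}_{\tau,TV}$ that actually enters the argument is of the form $\gamma_0(\tau_n^*,\cdot)$, which Lemma~\ref{lemma::TVnormBounded} bounds in total variation and which is sup-norm bounded because $Y$ is bounded under Condition~\ref{assumption::A1} --- are both sound, and either one closes the small imprecision in the paper's statement.
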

\begin{proof}
By Conditions \ref{assumption::A1}, \ref{assumption::A2} and \ref{assumption::A4}, we know that $\chi_m(o)$ is bounded uniformly over all observations $o \in \mathcal{O}$ and realizations of $\mathcal{E}_m$, that is, there exists a finite fixed constant $B$ such that $ \esssup_{m \in \mathbb{N},o \in \mathcal{O}}\chi_m(o) \leq B/2$. Hence,  as defined in the previous section, $\mathcal{F}_{iso}$ is uniformly bounded. Moreover, because $\mathcal{F}_{iso}$ is bounded, it directly implies that $\mathcal{F}_{\tau, iso}$ is bounded. Noting that functions of finite variation are bounded, in view of Condition \ref{assumption::A6}, we have that $\mathcal{F}_{TV}$ is  uniformly bounded by some constant that depends neither on $\theta$ nor $\tau$. This implies that $\mathcal{F}_{\tau,TV}$ is uniformly bounded. Finally, because $\mathcal{F}_{\tau,TV}$, $\mathcal{F}_{\tau,iso}$, $\chi_m$ and the potential outcomes are uniformly bounded, the function class $\mathcal{F}_{Lip,m}$ is also uniformly bounded.
\end{proof}

\begin{lemma}
Under conditions \ref{assumption::A6} and the conditions of Lemma \ref{lem:lem2}, the function $\tau' \mapsto E[Y_1 - Y_0 \,|\,\tau_n^*(W) = \tau']$ has total variation bounded above by three times the total variation of $\theta_0$, where $\theta_0$ is as in Condition \ref{assumption::A6}.
\label{lemma::TVnormBounded}
\end{lemma}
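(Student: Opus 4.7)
The plan is to exploit that the isotonic regression output $\theta_n^*$ is, by the convention adopted in the paper, a c\`{a}dl\`{a}g piecewise constant nondecreasing function on $\mathcal{T}$. Hence its range is a finite set $\tau'_1 < \tau'_2 < \cdots < \tau'_J$, and the preimages $A_j := (\theta_n^*)^{-1}(\{\tau'_j\}) \subset \mathcal{T}$ are pairwise disjoint intervals with $A_1 < A_2 < \cdots < A_J$ (in the sense that every point of $A_j$ is strictly less than every point of $A_{j+1}$). Restricting attention to those $j$ for which $P(\tau(W) \in A_j) > 0$, the event $\{\tau_n^*(W) = \tau'_j\}$ coincides with $\{\tau(W) \in A_j\}$, so the TV of interest is simply $\sum_{j=1}^{J-1} |g(\tau'_{j+1}) - g(\tau'_j)|$, where $g(\tau'_j) := E[Y_1 - Y_0 \mid \tau_n^*(W) = \tau'_j]$.

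First I would apply the tower property and the defining identity $E[\tau_0(W) \mid \tau(W)] = \gamma_0(\tau, \cdot) = \theta_0 \circ \tau$ from Condition \ref{assumption::A6} to rewrite $g(\tau'_j) = E[\theta_0(\tau(W)) \mid \tau(W) \in A_j]$. Thus each value of $g$ is an interval-average of $\theta_0$ against the push-forward law of $\tau(W)$ on $A_j$. This reduces the claim to a purely deterministic statement: the TV of a sequence of interval-averages of a BV function along a monotone partition is controlled by the TV of the function.

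Next I would pick an arbitrary representative $u_j \in A_j$ for each $j$ and insert/subtract $\theta_0(u_j)$ via the triangle inequality,
\[
|g(\tau'_{j+1}) - g(\tau'_j)| \le |\theta_0(u_{j+1}) - \theta_0(u_j)| + |g(\tau'_j) - \theta_0(u_j)| + |g(\tau'_{j+1}) - \theta_0(u_{j+1})|.
\]
The key pointwise estimate is
\[
|g(\tau'_j) - \theta_0(u_j)| \le E\bigl[|\theta_0(\tau(W)) - \theta_0(u_j)| \,\big|\, \tau(W) \in A_j\bigr] \le V_j,
\]
where $V_j$ denotes the total variation of $\theta_0$ restricted to the interval $A_j$, using that the oscillation of a BV function on an interval is bounded by its variation on that interval.

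Finally I would sum over $j$. The telescoping bound $\sum_{j=1}^{J-1} |\theta_0(u_{j+1}) - \theta_0(u_j)| \le TV(\theta_0)$ follows from $u_1 < u_2 < \cdots < u_J$, and the additivity of variation on pairwise disjoint subintervals yields $\sum_{j=1}^{J} V_j \le TV(\theta_0)$. Combining these bounds, each $V_j$ term is counted at most twice (once for index $j$ and once for $j-1$), giving the desired
\[
\sum_{j=1}^{J-1} |g(\tau'_{j+1}) - g(\tau'_j)| \le TV(\theta_0) + 2\,TV(\theta_0) = 3\,TV(\theta_0).
\]
No step is particularly deep; the most delicate bookkeeping is verifying that the level sets of $\theta_n^*$ really are intervals (using the c\`{a}dl\`{a}g piecewise constant convention) and that the sum of the restricted variations $V_j$ is bounded by $TV(\theta_0)$, which relies on the additivity of total variation over disjoint intervals. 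Neither obstacle is serious, so the factor of $3$ above is exactly the one produced by this triangle-inequality scheme.
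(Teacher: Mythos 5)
Your proof is correct and takes a genuinely different route from the paper. The paper invokes a Jordan decomposition of $\theta_0$ into a difference of bounded nondecreasing functions $\theta_0^+ - \theta_0^-$, with $\theta_0^+$ chosen to be the running-variation function (so $\|\theta_0^+\|_{TV}=\|\theta_0\|_{TV}$ and $\|\theta_0^-\|_{TV}\le 2\|\theta_0\|_{TV}$). It then observes that the conditional-expectation averaging preserves monotonicity for each piece, so the target function is itself a difference of two nondecreasing functions whose TVs inherit the bounds, giving $1+2=3$. Your argument instead stays at the level of the raw BV function: you pick one representative point per level set, interpolate through $\theta_0$ at those points via the triangle inequality, and control the error by the local variations $V_j$, using superadditivity of total variation over the disjoint intervals $A_j$ and the elementary fact that oscillation is bounded by variation. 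The paper's route buys a structural byproduct (the target function is explicitly realized as a difference of nondecreasing functions, which feeds cleanly into the entropy arguments elsewhere via $\mathcal{F}_{iso}$); your route is more elementary and makes the source of the factor $3$ more transparent — one copy from the telescoping sum, two from the two-sided deviation-from-representative terms. Both arguments leave the constant slack (a sharper Jordan decomposition into positive/negative variation parts would give constant $1$), but that slack is irrelevant for the paper's use of this lemma.
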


\begin{proof}    
 Since the function $\theta_n^*$ is nondecreasing and piecewise constant, we have $$E[Y_1 - Y_0 \,|\, (\theta_n^* \circ \tau)(W) = \tau'] = E[Y_1 - Y_0 \,|\, \tau(W) \in B_{\tau'}]$$
for the set $B_{\tau'} := \left\{ z \in \mathcal{T}: \theta_n^*(z) = \tau'\right\}$, where $B_{\tau'} = \{z \in \mathcal{T}: a(\tau') \leq z < b(\tau')\}$ for some endpoints $a(\tau'), b(\tau') \in \mathbb{R}$. The law of total expectation further implies that
$$E[Y_1 - Y_0 \,|\, \tau(W) \in B_{\tau'}] = E[\theta_0 \circ \tau (W) \,|\, \tau(W) \in B_{\tau'}]\ ,$$
where $\theta_0 $ is such that $\theta_0 \circ \tau(W)= \gamma_0(\tau,W)$ $P$-almost surely. By Condition \ref{assumption::A6}, the function $\theta_0$ is of bounded total variation. Heuristically, since $\tau' \mapsto E[\theta_0 \circ \tau (W) \,|\, \tau(W) \in B_{\tau'}]$ is obtained by locally averaging $\theta_0$ within the bins $(B_{\tau'}: \tau')$, its total variation should also be bounded. We show this formally as follows. Note first that
$$E[\theta_0 \circ \tau (W) \,|\, \tau(W) \in B_{\tau'}] = E[\theta_0^{+}  \circ \tau (W) \,|\, \tau(W) \in B_{\tau'}] - E[\theta_0^{-} \circ \tau (W) \,|\, \tau(W) \in B_{\tau'}]\ ,$$
where $\theta_0^{+}$ and $\theta_0^{-}$ are two bounded, nondecreasing functions satisfying the Jordan decomposition $\theta_0 = \theta_0^{+} - \theta_0^{-}$ (Theorem 4, Section 5.2 of \citealp{royden1963real}). Moreover, we can choose $\theta_0^{+}$ such that $\theta_0^{+}(\infty) - \theta_0^{+}(-\infty)$ is equal to the total variation of $\theta_0$. Since $\|\theta_0^{-}\|_{TV} = \|\theta_0 - \theta_0^{+}\|_{TV}\leq  \|\theta_0\|_{TV} + \|\theta_0^{+}\|_{TV}$, we have that $\|\theta_0^{-}\|_{TV}$ is bounded by $2\|\theta_0\|_{TV}$.

Since $\theta_n^*$ is nondecreasing, by definition, we have that $t_1 < t_2$ implies that $x_1 < x_2$ for any $x_1 \in B_{t_1}$ and $x_2 \in   B_{t_2}$. It follows that both $ \tau' \mapsto E[\theta_0^{+} \circ \tau (W) \,|\,\tau(W) \in B_{\tau'}]$ and $ \tau' \mapsto E[\theta_0^{-} \circ \tau (W) \,|\, \tau(W) \in B_{\tau'}]$ are nondecreasing; furthermore, they are also bounded. By Theorem 4 of \citet{royden1963real}, a function is of bounded variation if and only if it is the difference between two bounded nondecreasing functions. We conclude that $\tau' \mapsto E[Y_1 - Y_0 \,|\, \theta_n^* \circ \tau(W) = \tau'] =E[\theta_0^{+}  \circ \tau (W) \,|\, \tau(W) \in B_{\tau'}] - E[\theta_0^{-} \circ \tau (W) \,|\, \tau(W) \in B_{\tau'}]$ is of bounded variation. Moreover, its total variation norm is bounded above by the sum of the total variation norm of $E[\theta_0^{+}  \circ \tau (W) \,|\, \tau(W) \in B_{\tau'}]$ and that of $ E[  \theta_0^{-} \circ \tau (W) \,|\, \tau(W) \in B_{\tau'}]$. We recall that the total variation of monotone functions is simply the difference between the left and right endpoints of the monotone function, and that
$$\essinf_{w \in \mathcal{W}} (\theta_0^{+}  \circ \tau )(w)  \leq E[\theta_0^{+}  \circ \tau (W) \,|\, \tau(W) \in B_{\tau'}]   \leq \esssup_{w \in \mathcal{W}} (\theta_0^{+}  \circ \tau )(w), $$
and similarly for $\theta_0^{-}\circ \tau$.  As a consequence, the total variation norms of $E[\theta_0^{+}  \circ \tau (W) \,|\, \tau(W) \in B_{\tau'}] $ and $ E[  \theta_0^{-} \circ \tau (W) \,|\, \tau(W) \in B_{\tau'}] $ are bounded by the total variation norm of $\theta_0^{+} $ and that of $\theta_0^{-} $, respectively. Using the sublinearity of the total variation norm, we conclude that $\tau' \mapsto E[Y_1 - Y_0 \,|\, \theta_n^* \circ \tau(W) = \tau'] $ has total variation norm bounded above by $3 \|\theta_0\|_{TV}$.  
\end{proof}
 
\subsection{Proofs of theorems}
\subsubsection*{Proof of Theorem \ref{theorem1}}
 
\begin{proof}
Conditioning on $\mathcal{D}_n$, we have that 
\begin{align*}
&\hspace{-.4in}E\left\{\left[\gamma_0(\tau_n^*,W) - \tau_n^*(W) \right] \left[\chi_0(O)- \tau_n^*(W) \right] \,|\, \mathcal{D}_n \right\}\\
&=\
E \{ E\left\{\left[\gamma_0(\tau_n^*,W) - \tau_n^*(W) \right] \left[\chi_0(O)- \tau_n^*(W)\right] | W  \right\}\,|\, \mathcal{D}_n \}\\
&=\ E\{\left[\gamma_0(\tau_n^*,W) - \tau_n^*(W) \right] \left[\tau_0(W)- \tau_n^*(W) \right]\,|\, \mathcal{D}_n\} \\
&=\ 
E\{E\left\{\left[\gamma_0(\tau_n^*,W) - \tau_n^*(W) \right] \left[\tau_0(W)- \tau_n^*(W) \right]| \tau_n^*(W)\right\} \,|\, \mathcal{D}_n\} \\
&=\ E\{\left[\gamma_0(\tau_n^*,W) - \tau_n^*(W) \right] \left[\gamma_0(\tau_n^*,W)- \tau_n^*(W) \right]\,|\, \mathcal{D}_n\}  \\
&=\ E\{\left[\gamma_0(\tau_n^*,W) - \tau_n^*(W) \right]^2\,|\, \mathcal{D}_n\}\ . 
\end{align*}
The above equality implies that 
\begin{align}
\int \left\{ \gamma_0(\tau_n^*, w) - \tau_n^*(w) \right\}^2 dP(w)\ &=\ \int \left\{ \gamma_0(\tau_n^*, w) - \tau_n^*(w) \right\} \left\{ \chi_0(o)- \tau_n^*(w) \right\} dP(o)\notag\\
&=\ \int \left\{ \gamma_0(\tau_n^*, w) - \tau_n^*(w) \right\} \left\{ \chi_0(o)- \chi_m(o)\right\}dP(o)\label{eq1:theo1}\\
&\hspace{0.22in}+ \int \left\{ \gamma_0(\tau_n^*, w) - \tau_n^*(w) \right\} \left\{ \chi_m(o)- \tau_n^*(w) \right\} dP(o)\ .\notag
\end{align}
Note that, by Lemma \ref{lem:lem1}, for each real-valued function $r$, $\tau_n^*$ satisfies the equation
\begin{equation*}
\frac{1}{\ell}\sum_{i \in \mathcal{I}_{\ell}} r(\tau_n^*(W_i)) \left[\chi_m(O_i)- \tau_n^*(W_i)  \right] = 0\ .
\end{equation*}
Setting $r(\tau') :=  E[Y_1 - Y_0 \,|\, \tau_n^*(W) = \tau']  - \tau'$, we conclude that
$$\int \left\{ \gamma_0(\tau_n^*, w) -\tau_n^*(w) \right\} \left\{ \chi_m(o)- \tau_n^*(w) \right\} dP_{\ell}(o) = 0\ .$$
Subtracting the above score equation from the second summand in \eqref{eq1:theo1}, we obtain that
\begin{align}
 \int \left\{ \gamma_0(\tau_n^*, w) - \tau_n^*(w) \right\}^2 dP(w)\ &=\  \int \left\{ \gamma_0(\tau_n^*, w) - \tau_n^*(w) \right\} \left\{ \chi_0(o)- \chi_m(o)\right\} dP(o) \label{eq2:theo1}\\ 
&\hspace{.2in}+\int \left\{ \gamma_0(\tau_n^*, w) - \tau_n^*(w) \right\} \left\{\chi_m(o)- \tau_n^*(w) \right\} d(P-P_{\ell})(o)\nonumber\ .
\end{align}
This may be written in shorthand as $ \norm{\gamma_0(\tau_n^*, \cdot) - \tau_n^* }^2=(I)+(II)$ with \begin{align*}
(I)\ &:=\ P\{ [\gamma_0(\tau_n^*, \cdot) - \tau_n^*] (\chi_0- \chi_m)\}\\
(II)\ &:=\ (P-P_{\ell}) \{ [\gamma_0(\tau_n^*, \cdot) - \tau_n^*] (\chi_m- \tau_n^*)\}\ .
\end{align*}
In order to show the desired result, we will bound both $(I)$ and $(II)$.

We can bound $(I)$ using the law of iterated conditional expectations and the Cauchy-Schwarz inequality. First, conditioning on $\mathcal{E}_m$, we note that  
\begin{align}
\label{eq3:theo1}
 &P\{[\gamma_0(\tau_n^*, \cdot) - \tau_n^*]( \chi_0- \chi_m)\}\notag\\
 &=\ \int \left\{ \gamma_0(\tau_n^*, w) - \tau_n^*(w) \right\} E[\chi_0(O) - \chi_m(O)\,|\, W = w, \mathcal{E}_m]\,dP(w) \nonumber \\  &\leq\  \norm{ \gamma_0(\tau_n^*, \cdot) - \tau_n^*}\norm{E[\chi_0(O)\,|\, W=\,\cdot\, ]- E[\chi_m(O) \,|\, W = \, \cdot\, , \mathcal{E}_m ]}\ .
\end{align}
Next, we express the second norm in \eqref{eq3:theo1} in terms of $\norm{\pi_m - \pi_0}$ and $\norm{\mu_m- \mu_0}$. Recalling that $E[\chi_0(O) \,|\, W=w] = \tau_0(w)$, we have that
\begin{align*}
&E[\chi_m(O) \,|\, W=w, \mathcal{E}_m] - E[\chi_0(O) \,|\, W=w]\\
&=\ \mu_m(1,w) - \mu_0(1,w) - [\mu_m(0,w) - \mu_0(0,w)]+ \frac{\pi_0(w)}{\pi_m(w) } \left[\mu_0(1,w) - \mu_m(1,w) \right]\\
&\hspace{.2in}+\frac{1- \pi_0(w)}{1-\pi_m(w) } \left[\mu_0(0,w) - \mu_m(0,w) \right]\\
&=\ \left[\frac{\pi_0(w) -\pi_m(w)}{\pi_m(w) } \right]\left[\mu_0(1,w) - \mu_m(1,w)\right]+ \left[\frac{\pi_m(w) - \pi_0(w)}{1-\pi_m(w) } \right]\left[\mu_0(0,w) - \mu_m(0,w)\right].
\end{align*}
By Condition \ref{assumption::A2}, $P(1- \eta > \pi_m(W) > \eta) = 1$ for some $\eta>0$. The latter condition combined with the Cauchy-Schwarz inequality gives that  $\norm{E[\chi_0(O)\,|\, W = \,\cdot\, ]- E[\chi_m(O) \,|\, W = \,\cdot\,, \mathcal{E}_m ]}$ is bounded above by
\begin{align*}
 \norm{[\pi_m(\cdot) - \pi_0(\cdot)][\mu_0(0,\cdot) - \mu_m(0,\cdot)]}+  \norm{[\pi_m(\cdot) - \pi_0(\cdot)][\mu_0(1,\cdot) - \mu_m(1,\cdot)]}.
\end{align*}
By Condition \ref{assumption::A2}, we also have that for any $P$-measurable function $h:\mathcal{W} \rightarrow \mathbb{R}$
\begin{align*}
\int h(w)^2[\mu_0(1,w) - \mu_m(1,w)]^2 dP(w)\ &=
\ \iint h(w)^2[\mu_0(a,w) - \mu_m(a,w)]^2 \frac{a}{\pi_0(w)} P(da,dw)\\
&\leq\ \frac{1}{\eta} \iint h(w)^2[\mu_0(a,w) - \mu_m(a,w)]^2 P(da,dw)\ .
\end{align*}
The same bound holds for $\int h(w)^2 [\mu_0(0,w) - \mu_m(0,w)]^2 dP(w)$. Setting $h:w\mapsto \pi_m(w) - \pi_0(w)$, we conclude 
\begin{align}
\begin{split}
\label{eq4:theo1}
& \norm{E[\chi_m(O)\,|\, W = \,\cdot\,, \mathcal{E}_m ]- E[\chi_0(O) \,|\, W = \,\cdot\,]}
\ \lessapprox\ \norm{(\pi_m - \pi_0)(\mu_0 - \mu_m)}.
\end{split}
\end{align}
Together, \eqref{eq3:theo1} and \eqref{eq4:theo1} yield that $(I)$ is bounded above by
\begin{align}
\label{eq5:theo1}
\begin{split}
& P\{[\gamma_0(\tau_n^*, \cdot) - \tau_n^*](\chi_0- \chi_m)\} 
\ \lessapprox\ \norm{\gamma_0(\tau_n^*, \cdot) - \tau_n^* }  \norm{\left(\pi_m - \pi_0\right)\left(\mu_0 - \mu_m\right)}.    
\end{split}
\end{align}

We now find an upper bound for  $(II)$.  We claim that, conditionally on $\mathcal{E}_m$, the random functions appearing in this empirical process term are contained in fixed and uniformly bounded function classes. To see this, we note that $\tau_n^* = \theta_n^* \circ \tau$ for some $\theta_n^* \in \mathcal{F}_{iso}$ and, as a consequence, $\tau_n^* \in \mathcal{F}_{\tau,iso}$, a uniformly bounded function class by Lemma \ref{lem:lem2}, $P_0$-almost surely. By Lemma \ref{lemma::TVnormBounded}, the function $w\mapsto\gamma_0(\tau_n^*,w)$ falls in $\mathcal{F}_{\tau,TV}$. This further implies that $o \mapsto \{E[Y_1 - Y_0 \,|\, \tau_n^*(W) = \tau_n^*(w) ] - \tau_n^*(w) \} \{ \chi_m(o) - \tau_n^*(w) \} \in \mathcal{F}_{Lip,m}$, which is a uniformly bounded function class by Lemma \ref{lem:lem2}.

 Next, we let $C := \esssup_{x \in \mathcal{T}}|\theta_0(x)|$ and define $K := B + C$, where we recall that $B:=\sup_{ m \in \mathbb{N}}\sup_{\mathcal{E}_m}\esssup_{o \in \mathcal{O}}\left\{|\chi_0(o)| + |\chi_m(o)|\right\}$. Furthermore, we set $\delta_n := \norm{\gamma_0(\tau_n^*, \cdot) - \tau_n^* }$, which is a random rate. For any given rate $\delta$, we define
\begin{align*}
S_n(\delta):=&\sup_{\tau_1 \in \mathcal{F}_{\tau,TV}, \tau_2 \in \mathcal{F}_{\tau, iso}: \norm{ \tau_1 - \tau_2 }  \leq \delta } (P-P_{\ell})\{(\tau_1 - \tau_2)(\chi_m- \tau_2)\}=\sup_{f \in \mathcal{F}_{Lip,m}: \|f\| \leq \delta K}(P-P_{\ell})f\ .
\end{align*}
As a consequence of the above, we have that $(II)\leq S_n(\delta_n)$. Due to the randomness in $\delta_n$, the above cannot be further upper-bounded immediately. To bound the term above, we will take a $\delta>0$ that is deterministic conditional on $\mathcal{E}_m$, and upper-bound $\phi_n(\delta) := E \left\{S_n(\delta)\right\}$, where the expectation is also taken over $\mathcal{D}_n$. To bound the above term, we will use empirical process techniques with the function classes $\mathcal{F}_{iso}$, $\mathcal{F}_{\tau,TV}$, $\mathcal{F}_{\tau, iso}$ and $\mathcal{F}_{Lip,m}$.  To do so, we must study the uniform entropy integral \[\mathcal{J}(\delta, \mathcal{F}) :=  \int_0^{\delta} \sup_Q \sqrt{N(\varepsilon, \mathcal{F}, \norm{\,\cdot\,}_Q)}\,d\varepsilon\] for each of these function classes. By Lemma \ref{lem:lem2}, all these function classes are uniformly bounded. We note that, conditional on $\mathcal{E}_m$ so that $\chi_m$ is fixed, $\mathcal{F}_{Lip,m}$ is a multivariate Lipschitz transformation of $\mathcal{F}_{\tau,TV}$ and $\mathcal{F}_{\tau,iso}$, and therefore, by Theorem 2.10.20 of \cite{van1996weak}, we have that $\mathcal{J}(\delta, \mathcal{F}_{Lip,m}) \lessapprox  \mathcal{J}(\delta, \mathcal{F}_{\tau,TV}) + \mathcal{J}(\delta, \mathcal{F}_{\tau,iso}).$ Since functions of bounded total variation can be written as a difference of nondecreasing monotone functions, we have by the same theorem that $\mathcal{J}(\delta, \mathcal{F}_{TV}) \lessapprox \mathcal{J}(\delta, \mathcal{F}_{iso}).$
We claim the same upper bound holds up to a constant for $\mathcal{F}_{\tau,TV}$ and $\mathcal{F}_{\tau,iso}$. We establish this explcitly for $\mathcal{F}_{\tau,iso}$ below; the result for $\mathcal{F}_{\tau,TV}$ follows from an identical argument. We note that
\begin{align*}
\mathcal{J}(\delta, \mathcal{F}_{\tau,iso})& =  \int_0^{\delta} \sup_Q \sqrt{N(\varepsilon, \mathcal{F}_{\tau,iso}, \norm{\,\cdot\,}_Q)}\,d\varepsilon =  \int_0^{\delta} \sup_Q \sqrt{N(\varepsilon, \mathcal{F}_{iso}, \norm{\,\cdot\,}_{Q \circ \tau^{-1}})}\, d\varepsilon  = \mathcal{J}(\delta, \mathcal{F}_{iso})\ ,  
\end{align*}
where $Q \circ \tau^{-1}$ is the push-forward probability measure for the random variable $\tau(W)$.  We now proceed with bounding $\phi_n(\delta)$. Applying Theorem 2.10.20 of \cite{van1996weak}, we obtain, for any $\delta > 0$ deterministic conditionally on $\mathcal{E}_m$, that
\begin{align}
\label{eq7:theo1}
E\left[S_n(\delta)\,|\, \mathcal{E}_m\right]\ &\lessapprox\  \ell^{-1/2} \mathcal{J}(\delta, \mathcal{F}_{Lip,m}) \left(1 + \frac{\mathcal{J}(\delta, \mathcal{F}_{Lip,m})}{\sqrt{\ell} \delta^2} \right)\nonumber  \\
&\lessapprox\ \ell^{-1/2}\mathcal{J}(\delta, \mathcal{F}_{iso}) \left(1 + \frac{\mathcal{J}(\delta, \mathcal{F}_{iso})}{\sqrt{\ell} \delta^2} \right),
\end{align}
where the right-hand side can only be random through $\delta$. 

We can now proceed with the main argument that gives a rate of convergence for $\delta_n$. First, we note that combining Equations \ref{eq2:theo1} and \ref{eq5:theo1} yields that the event 
\begin{align*}
\left\{\norm{\gamma_0(\tau_n^*, \cdot) - \tau_n^* }^2 \leq \norm{\gamma_0(\tau_n^*, \cdot) - \tau_n^* }\norm{(\pi_m - \pi_0)(\mu_m - \mu_0)} + S_n(\delta_n)\right\}
\end{align*}
occurs with probability one. We then proceed with a peeling argument to account for the randomness of  $\delta_n$. Let $\varepsilon_n$ be any given sequence that is deterministic conditional on $\mathcal{E}_m$, and define $A_s$ as the event $\left\{2^{s+1} \varepsilon_n \geq \norm{\gamma_0(\tau_n^*, \cdot) - \tau_n^* } \geq 2^s \varepsilon_n\right\}$ as well as the random quantity $\epsilon_m^{nuis} := \norm{(\pi_m - \pi_0)(\mu_m - \mu_0)}$. Then, for any $S>0$, we have that
\begin{align}
\label{eq8:theo1}
\left(\norm{\gamma_0(\tau_n^*, \cdot) - \tau_n^* } \geq 2^S \varepsilon_n\right)\
 &=\ \sum_{s=S}^\infty P \left(2^{s+1} \varepsilon_n \geq \norm{\gamma_0(\tau_n^*, \cdot) - \tau_n^* } \geq 2^s \varepsilon_n\right)\ =\ \sum_{s=S}^\infty P(A_s) \nonumber \\
&= \sum_{s=S}^\infty P \Bigg(A_s, \delta_n^2
\leq \delta_n\epsilon_m^{nuis}+ S_n(\delta_n) \Bigg)\ .
\end{align}
In all the events in the above sum, we have that $S_n(\delta_n) \leq S_n(2^{s+1}\varepsilon_n)$ since $\delta_n = \norm{\gamma_0(\tau_n^*, \cdot) - \tau_n^*}$. Next, manipulating the inequalities in the above events, we have that
\begin{align*}
\left\{A_s, \delta_n^2 \leq  \delta_n\epsilon_m^{nuis} + S_n(\delta_n) \right\}\ &\subseteq\ \left\{A_s, \delta_n^2 \leq  2^{s+1}\varepsilon_n \epsilon_m^{nuis} + S_n(2^{s+1}\varepsilon_n) \right\}  \\
&\subseteq\ \left\{2^{2s} \varepsilon_n^2\leq  \delta_n^2 \leq 2^{s+1}\varepsilon_n\epsilon_m^{nuis} + S_n(2^{s+1}\varepsilon_n) \right\}  \\
&\subseteq\ \left\{2^{2s} \varepsilon_n^2\leq 2^{s+1}\varepsilon_n\epsilon_m^{nuis} + S_n(2^{s+1}\varepsilon_n) \right\},
\end{align*}
which implies that the sum in \eqref{eq8:theo1} is upper bounded by 
\begin{align*}
\sum_{s=S}^\infty P \bigg( &2^{2s}\varepsilon_n^2 \leq 2^{s+1} \varepsilon_n \epsilon_m^{nuis} + S_n(2^{s+1}\varepsilon_n)  \bigg)\ . 
\end{align*}

Using \eqref{eq7:theo1} and Markov's inequality, we find that
\begin{align*}
&\sum_{s=S}^\infty P \bigg( 2^{2s}\varepsilon_n^2 \leq 2^{s+1} \varepsilon_n\epsilon_m^{nuis}+ S_n(2^{s+1}\varepsilon_n)  \bigg) \\   
&\leq\ \sum_{s=S}^\infty   E \left\{P \bigg( 2^{2s}\varepsilon_n^2 \leq 2^{s+1} \varepsilon_n\epsilon_m^{nuis}+ S_n(2^{s+1}\varepsilon_n)    \,|\, \mathcal{E}_m  \bigg)\right\} \\   
 &\leq\ \sum_{s=S}^\infty   E \left\{
 \frac{2^{s+1} \varepsilon_n \epsilon_m^{nuis} + E[S_n(2^{s+1}\varepsilon_n) \,|\, \mathcal{E}_m] }{2^{2s}\varepsilon_n^2} \right\} \\
 &\lessapprox\ \sum_{s=S}^\infty  E \left[
 \frac{ \epsilon_m^{nuis}}{2^{s-1}\varepsilon_n} +\frac{ \mathcal{J}(2^{s+1}\varepsilon_n, \mathcal{F}_{iso})}{2^{2s}\sqrt{\ell}\varepsilon_n^2}\left(1 + \frac{\mathcal{J}(2^{s+1}\varepsilon_n, \mathcal{F}_{iso})}{\sqrt{\ell}2^{2s+1}\varepsilon_n^2} \right) \right] .
\end{align*}

As a consequence of Lemma \ref{lem:lem2} and the covering number bound for bounded monotone functions given in Theorem 2.7.5 of \citet{van1996weak}, we have that $\mathcal{J}( 2^{s+1}\varepsilon_n, \mathcal{F}_{iso}) = 2^{s/2 + 1/2} \sqrt{\varepsilon_n}$. Using this fact, we find that 
$$\frac{ \mathcal{J}(2^{s+1}\varepsilon_n, \mathcal{F}_{iso}) }{2^{2s}\sqrt{\ell}\varepsilon_n^2} \lessapprox\frac{1}{2^s}\frac{ \mathcal{J}( \varepsilon_n, \mathcal{F}_{iso}) }{ \sqrt{\ell}\varepsilon_n^2}\ ,$$
from which it follows that 
\begin{align*}
& \frac{ \mathcal{J}(2^{s+1}\varepsilon_n, \mathcal{F}_{iso}) \left(1 + \frac{\mathcal{J}(2^{s+1}\varepsilon_n, \mathcal{F}_{iso})}{\sqrt{\ell}2^{2s+1}\varepsilon_n^2} \right) }{2^{2s}\sqrt{\ell}\varepsilon_n^2} 
\lessapprox2^{-s}\frac{ \mathcal{J}(\varepsilon_n, \mathcal{F}_{iso}) \left(1 + \frac{\mathcal{J}( \varepsilon_n, \mathcal{F}_{iso})}{\sqrt{\ell} \varepsilon_n^2} \right) }{ \sqrt{\ell}\varepsilon_n^2}\ .
\end{align*}
We now choose $\varepsilon_n := \max\{\ell^{-1/3}, \norm{(\pi_m - \pi_0)(\mu_m - \mu_0)}\}$, which indeed is  deterministic conditional on $\mathcal{E}_m$. This choice ensures that $\mathcal{J}(\varepsilon_n, \mathcal{F}_{iso}) \lessapprox \sqrt{\ell} \varepsilon_n^2$ and $\epsilon_m^{nuis} = \norm{(\pi_m - \pi_0)(\mu_m - \mu_0)} \lessapprox \varepsilon_n$, so that 
$$ 
 \frac{ \epsilon_m^{nuis}}{2^{s-1}\varepsilon_n} +\frac{ \mathcal{J}(2^{s+1}\varepsilon_n, \mathcal{F}_{iso}) }{2^{2s}\sqrt{\ell}\varepsilon_n^2} \left(1 + \frac{\mathcal{J}(2^{s+1}\varepsilon_n, \mathcal{F}_{iso})}{\sqrt{\ell}2^{2s+1}\varepsilon_n^2} \right) \lessapprox \frac{ 1}{ 2^{s}}\ ,$$
 where the right-hand side is nonrandom. Thus, we have that
\begin{align*}
P \left(\norm{\gamma_0(\tau_n^*, \cdot) - \tau_n^* } \geq 2^S \varepsilon_n\right) \ \lessapprox\ \sum_{s=S}^\infty \frac{ 1}{ 2^{s}} \xrightarrow[S \to\infty]{}   0\ .
\end{align*}
As a consequence, for every $\varepsilon > 0$, we can find a constant $2^S$ sufficiently large such that $P \left(\norm{\gamma_0(\tau_n^*, \cdot) - \tau_n^* } \geq 2^S \varepsilon_n\right) < \varepsilon$. In other words, we have shown that $\norm{\gamma_0(\tau_n^*, \cdot) - \tau_n^* } = O_P(\varepsilon_n)$ for our choice of $\varepsilon_n$, and so,  $CAL(\tau_n^*)=\norm{\gamma_0(\tau_n^*, \cdot) - \tau_n^* }^2=O_P(\varepsilon_n^2)$. The result follows from that the fact that $\varepsilon_n^2\leq \ell^{-2/3}+ \norm{(\pi_m - \pi_0)(\mu_m - \mu_0)}^2$.
\end{proof}

\subsubsection*{Proof of Theorem \ref{theoremmedian}}
\begin{proof}
By the definition of the pointwise median stated in Section \ref{sec::notation},  for each covariate value $w \in \mathcal{W}$, there exists some random index $j_n(w)$ such that $\tau_n^*(w) = \tau_{n,j_n(w)}^*(w)$. (We note here that this property may fail for other definitions of the median when $k$ is even.) Thus, we have that $|\gamma_0(\tau_n^*, w) - \tau_n^*(w)| = |\gamma_0(\tau_{n,j_n(w)}^*, w) - \tau_{n,j_n(w)}^*(w)| \leq \sum_{s=1}^k |  \gamma_0(\tau_{n,s}^*, w) - \tau_{n,s}^*(w) |$, and so,
\begin{align*}
   \norm{\gamma_0(\tau_n^*, \cdot) - \tau_n^*}\ \leq\ \norm{\sum_{s=1}^k |  \gamma_0(\tau_{n,s}^*, \cdot) - \tau_{n,s}^* | }\ &\leq\ \sum_{s=1}^k \norm{   \gamma_0(\tau_{n,s}^*, \cdot) - \tau_{n,s}^* }\\
   &\leq\ \sqrt{k\sum_{s=1}^k \norm{   \gamma_0(\tau_{n,s}^*, \cdot) - \tau_{n,s}^* }^2}\ ,
\end{align*}
where the final inequality follows from the Cauchy-Schwarz inequality. Squaring both sides gives that  $\normalfont\text{CAL}( \tau_n^*) \leq k\sum_{s=1}^k \normalfont\text{CAL}( \tau_{n,s}^*)$, as desired.
\end{proof}

\subsubsection*{Proof of Theorem \ref{theorem2}}
\begin{proof}
As before, we may write $\tau_n^* = \theta_n^* \circ \tau$ for some $\theta_n^* \in \mathcal{F}_{iso}$ that minimizes the empirical risk
\begin{align*}
R_n(\theta): \theta \mapsto   \sum_{i \in \mathcal{I}_{\ell} } \left[\chi_m(O_i) - \theta \circ \tau(W_i)\right]^2
\end{align*}over $\mathcal{F}_{iso}$.
For any given $\theta\in\mathcal{F}_{iso}$, the one-sided path $\{\varepsilon \mapsto \theta_n^* + \varepsilon ( \theta - \theta_n^*): \varepsilon \in [0,1]\}$ through $\theta_n^*$ lies entirely in $\mathcal{F}_{iso}$ since $\mathcal{F}_{iso}$ is a convex space. Furthermore, we have that
\begin{align}
\begin{split}
  &  - 2   \sum_{i \in \mathcal{I}_{\ell} }(\theta - \theta_n^* ) \circ \tau(W_i)[\chi_m(O_i) - \theta_n^* 
\circ \tau(W_i)]
=\lim_{\varepsilon \downarrow 0} \frac{R_n(\theta_n^* + \varepsilon ( \theta - \theta_n^*)) - R_n(\theta_n^*)}{\varepsilon}\geq 0
\end{split}
\label{proofeqn::ERMscoreEqn}
\end{align}
for all $\theta \in \mathcal{F}_{iso}$. The oracle isotonic risk minimizer $\tau_0^*$ can be expressed as $\tau_0^* = \theta_0 \circ \tau$ where $\theta_0 := \argmin_{\theta \in \mathcal{F}_{iso}} \norm{\theta \circ \tau - \tau_0}$. Taking $\theta = \theta_0$ in (\ref{proofeqn::ERMscoreEqn}), we obtain the inequality
\begin{equation}
  \sum_{i \in \mathcal{I}_{\ell} } [(\theta_0 - \theta_n^* ) \circ \tau(W_i)][\chi_m(O_i) - \theta_n^* 
\circ \tau(W_i)]  \leq 0\ .
\label{eqn::score}
\end{equation}
Rearranging terms and adding and subtracting $P_{\ell}\{[(\theta_0 - \theta_n^* ) \circ \tau](\chi_0)\}$ in the above inequality implies that
$P_{\ell} \{[(\theta_0 - \theta_n^* ) \circ \tau](\chi_m  - \chi_0)\} \leq  P_{\ell} \{[(\theta_0 - \theta_n^* ) \circ \tau](\theta_n^* 
\circ \tau - \chi_0 )\}$.
Adding and subtracting $ P\{[(\theta_0 - \theta_n^* ) \circ \tau](\theta_n^* 
\circ \tau - \chi_0)\}$ yields that
\begin{align}
\label{proofeqn::Theorem2tmp1}
&P_{\ell}\{[(\theta_0 - \theta_n^* ) \circ \tau](\chi_m  - \chi_0)\} \nonumber  - (P_{\ell} - P)\{[(\theta_0 - \theta_n^* ) \circ \tau](\theta_n^* 
\circ \tau - \chi_0)\}\nonumber\\
& \leq\ P\{[(\theta_0 - \theta_n^* ) \circ \tau](\theta_n^* 
\circ \tau - \chi_0)\}\ . 
\end{align}
Next, adding and subtracting $P\{(\theta_0\circ \tau) [(\theta_0 -\theta_n^*)\circ\tau]\}$, we have that
\begin{align}
\label{proofeqn::tmp3}
&P\{[(\theta_0 - \theta_n^* ) \circ \tau](\theta_n^* 
\circ \tau - \chi_0)\}\nonumber
\\
&=\ P\{[(\theta_0 - \theta_n^* ) \circ \tau][\theta_n^* 
\circ \tau - E[\chi_0(O) \,|\, W = \cdot\,]] \} \nonumber \\ 
&=\ P\{[(\theta_0 - \theta_n^* ) \circ \tau](\theta_n^* 
\circ \tau - \tau_0)\}  \nonumber \\
&=\ P\{[(\theta_0 - \theta_n^* ) \circ \tau ][(\theta_n^* -\theta_0)
\circ \tau]\}+ P\{[(\theta_0 - \theta_n^* ) \circ \tau](\theta_0
\circ \tau - \tau_0)\} \nonumber\\
&=\ P\{[(\theta_0 - \theta_n^* ) \circ \tau](\theta_0
\circ \tau - \tau_0)\}-\|(\theta_0 - \theta_n^* ) \circ \tau \|^2\ ,
\end{align}
where we used the fact that $E[\chi_0(O) \,|\, W=w] = \tau_0(w)$. Next, we note that $\theta_0$ minimizes the population risk function  $\theta \mapsto E_{P}[\tau_0(W) - \theta \circ \tau(W)]^2$ over $\mathcal{F}_{iso}$. As a consequence, the same argument used to derive \eqref{eqn::score} can be used to obtain that
$P\{[(\theta - \theta_0 ) \circ \tau](\tau_0 - \theta_0
\circ \tau)\}\leq 0$
for any $\theta \in \mathcal{F}_{iso}$. Taking $\theta = \theta_n^*$, we find that
\begin{equation}
    P\{[(  \theta_0-\theta_n^* ) \circ \tau](\theta_0
\circ \tau - \tau_0)\}  \leq 0\ .
\label{proofeqn::tmp2}
\end{equation}
Combining \eqref{proofeqn::tmp3} and \eqref{proofeqn::tmp2}, we obtain that
\begin{equation}
    P \{[(\theta_0 - \theta_n^* ) \circ \tau](\theta_n^* 
\circ \tau - \chi_0)\} \leq -\norm{(\theta_0 - \theta_n^* ) \circ \tau }^2.  \label{proofeqn::tmp5}
\end{equation}
Finally, combining \eqref{proofeqn::Theorem2tmp1} and \eqref{proofeqn::tmp5}, we obtain the following inequality
\begin{align*}
\norm{(\theta_0 - \theta_n^* ) \circ \tau }^2 \leq - P_{\ell}\{[(\theta_0 - \theta_n^* ) \circ \tau](\chi_m  - \chi_0)\} + (P_{\ell} - P)\{[(\theta_0 - \theta_n^* ) \circ \tau](\theta_n^* 
\circ \tau - \chi_0)\}\ .
\end{align*}
Adding and subtracting $P\{[(\theta_0 - \theta_n^* ) \circ \tau](\chi_m  - \chi_0)\}$ and noting that $ \tau_0^* - \tau_n^*  =  (\theta_0 - \theta_n^* )\circ \tau$, we finally obtain the key inequality
\begin{align}
\label{eq1:theo2}
     \norm{\tau_0^* - \tau_n^*}^2 \ &\leq\ P[(\tau_0^* - \tau_n^*)(\chi_0  - \chi_m)]+ (P-P_{\ell})[(\tau_0^* - \tau_n^*)(\chi_m  - \chi_0)] \nonumber\\
     &\hspace{.2in}+ (P_{\ell} - P)[(\tau_0^* - \tau_n^*)(\tau_n^* - \chi_0)]\ .
\end{align}
The above is similar to \eqref{eq2:theo1} in the proof of Theorem  \ref{theorem1}, and a similar proof technique is used to establish a convergence rate for $\tau_n^*$. Specifically, we use the Cauchy-Schwarz inequality to bound the first term on the right-hand side of \eqref{eq1:theo2} in terms of $\norm{\tau_0^* - \tau_n^*}$, and empirical process techniques to bound the remaining terms in terms of a function of $\norm{\tau_0^* - \tau_n^*}$ with high probability. Using a similar approach as for the derivation of \eqref{eq5:theo1}, we can upper-bound the first term of the right-hand side of \eqref{eq1:theo2} as
$P[(\tau_0^* - \tau_n^*)(\chi_0-\chi_m)]
\leq  \|\tau_0^* - \tau_n^*\|\|(\pi_m - \pi_0)(\mu_m - \mu_0)\|$. The second term in the right-hand side of \eqref{eq1:theo2} can be examined as follows. We let 
$\mathcal{F}_{4,m}:= \{(\tau_1 - \tau_2)(\chi_m-\chi_0); \tau_1, \tau_2 \in \mathcal{F}_{\tau,iso}\}$, and define $Q:= \sup_{o \in \mathcal{O}}\chi_0(o)$, which is finite in view of Conditions \ref{assumption::A1} and \ref{assumption::A2}. Additionally, we let $R:= Q + B$, and define for any fixed $\delta \in \mathbb{R}$
\[
Z_{1,n}(\delta):=\sup_{\theta_1, \theta_2 \in \mathcal{F}_{iso}: \norm{(\theta_1- \theta_2) \circ \tau} \leq \delta R }(P-P_{\ell})\{[ (\theta_1 - \theta_2 ) \circ \tau](\chi_m- \chi_0) \}=\sup_{f \in \mathcal{F}_{4,m}: \norm{f} \leq \delta R}(P-P_{\ell})f.\]
Letting $\delta_{1,n}:= \norm{\tau_0^* - \tau_n^*}$, we have that 
$(P-P_{\ell})[(\tau_0^* - \tau_n^*)(\chi_m  - \chi_0)] \leq Z_{1,n}(\delta_{1,n})$.
We
note that $\mathcal{F}_{4,m}$ is a Lipschitz transformation of the function classes $\mathcal{F}_{\tau,iso}$ and $\mathcal{F}_{\tau,iso}$, and so, for every $\delta > 0$ that is deterministic conditional on $\mathcal{E}_m$, we have that 
\begin{align*}
\psi_{1,n}(\delta \mid \mathcal{E}_m) := E [ Z_{1,n}(\delta) \mid \mathcal{E}_m]\ \lessapprox\  \ell^{-1/2}\mathcal{J}(\delta, \mathcal{F}_{iso}) \left(1 + \frac{\mathcal{J}(\delta, \mathcal{F}_{iso})}{\sqrt{\ell} \delta^2} \right)
\end{align*} in view of Theorem 2.10.20 of \cite{van1996weak} and the results outlined in Theorem \ref{theorem1}, 
where the right-hand side can only be random through $\delta$.
Finally, the third term in \eqref{eq1:theo2} can be studied as follows.  We let $\mathcal{F}_{5}:= \{(\tau_1-\tau_2)(\tau_2 - \chi_0): \tau_1, \tau_2 \in \mathcal{F}_{\tau,iso}\}$, and for any given $\delta>0$, we define
\[
Z_{2,n}(\delta) :=\sup_{\theta_1, \theta_2 \in \mathcal{F}_{iso}: \norm{(\theta_1- \theta_2) \circ \tau} \leq \delta G }(P-P_{\ell}) \{[(\theta_1 - \theta_2 ) \circ \tau](\theta_2 - \chi_0)\}= \sup_{f \in \mathcal{F}_{5}: \norm{f} \leq \delta G}(P-P_{\ell})f
\]
with $G:=Q+B$. We note that $\mathcal{F}_5$ is a Lipschitz transformation of $\mathcal{F}_{\tau,iso}$. Hence, similarly as above, for any $\delta > 0$ that is nonrandom conditional on $\mathcal{E}_m$, we have  that 
\[
\psi_{2,n}(\delta \mid \mathcal{E}_m):= E[Z_{2,n}(\delta) \mid \mathcal{E}_m]\ \lessapprox\ \ell^{-1/2}\mathcal{J}(\delta, \mathcal{F}_{iso}) \left(1 + \frac{\mathcal{J}(\delta, \mathcal{F}_{iso})}{\sqrt{\ell} \delta^2} \right),
\]
where the right-hand side can only berandom through $\delta$.
Defining $\epsilon_{m}^{nuis}:= \norm{(\pi_m - \pi_0)(\mu_m - \mu_0)}$, by a similar peeling argument as in Theorem \ref{theorem1}, for any rate $\varepsilon_n$ that is nonrandom conditional on $ \mathcal{E}_m$, we can show that
\begin{align*}
P \left(\norm{\tau_0^* - \tau_n^* } \geq 2^S \varepsilon_n\right)\ &\leq\   \sum_{s=S}^\infty E\left[
 \frac{2^{s+1} \varepsilon_n \epsilon_{m}^{nuis} + \psi_{1,n}(2^{s+1}\varepsilon_n \mid \mathcal{E}_m) +\psi_{2,n}(2^{s+1}\varepsilon_n \mid   \mathcal{E}_m) }{2^{2s}\varepsilon_n^2} \right]\\
& \lessapprox\  \sum_{s=S}^\infty E \left[
 \frac{ \epsilon_{m}^{nuis}}{2^{s-1}\varepsilon_n} +\frac{ \mathcal{J}(2^{s+1}\varepsilon_n, \mathcal{F}_{iso})}{2^{2s}\sqrt{\ell}\varepsilon_n^2}\left(1 + \frac{\mathcal{J}(2^{s+1}\varepsilon_n, \mathcal{F}_{iso})}{\sqrt{\ell}2^{2s+1}\varepsilon_n^2} \right)
\right].
\end{align*}
Then, by the same arguments used in Theorem \ref{theorem1} and the same choice of $\mathcal{E}_m$-random $\varepsilon_n$, we can establish that
$\norm{\tau_0^* - \tau_n^*} = O_P( \ell^{-1/3}) 
+ O_P(\norm{(\pi_m - \pi_0)(\mu_m-\mu_0)})$.
By the triangle inequality and the fact that $\tau_0^* = \argmin_{\theta \circ \tau: \theta \in \mathcal{F}_{iso}} \norm{\tau_0 - \theta \circ \tau}$ implies $ \norm{ \tau_0 - \tau_0^*} \leq \norm{ \tau_0 - \tau} $, we find that
$\norm{ \tau_0 - \tau_n^*}\leq  \norm{ \tau_0 - \tau_0^*} + \norm{\tau_0^* - \tau_n^*}  \leq \norm{ \tau_0 - \tau} + \norm{\tau_0^* - \tau_n^*}$.
Combining these bounds, we find that $\norm{ \tau_0 - \tau_n^*} \leq \norm{\tau_0 - \tau} +  O_P( \ell^{-1/3}) + O_P(\norm{(\pi_m - \pi_0)(\mu_m-\mu_0)})$.
\end{proof}

\subsection{Statement and proof of generalized Theorem \ref{theorem1} for random predictor}
\label{appendix::theorem1Generaliz}
Here, we consider the same setup as Theorem \ref{theorem1} but allow $\tau_n^*$ to be obtained from a random predictor $\tau_m$, as  long as $\tau_m$ is built using only data in $\mathcal{E}_m$.

\begin{condition}[independence of predictor]
\label{assumption::indOfPred}
The predictor $w\mapsto\tau_m(w)$ is independent of $\mathcal{C}_{\ell}$.
\end{condition}

\begin{theorem}[Calibration with random predictors]
\label{theorem1::RandomPredictor}
Provided Conditions \ref{assumption::A1}--\ref{assumption::indOfPred} hold, it holds that
\[\normalfont\text{CAL}(\tau_n^*)= O_P\left(\ell^{-2/3}  + \norm{(\pi_m - \pi_0)(\mu_m - \mu_0)}^2   \right).\]
\end{theorem}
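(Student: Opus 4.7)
The plan is to reduce Theorem~\ref{theorem1::RandomPredictor} to a conditional version of Theorem~\ref{theorem1} by conditioning on the training set $\mathcal{E}_m$. By Condition~\ref{assumption::indOfPred}, conditioning on $\mathcal{E}_m$ freezes the random objects $\tau_m$, $\pi_m$, $\mu_m$, and hence $\chi_m$, while leaving the observations in $\mathcal{C}_\ell$ i.i.d.\ from $P$. Thus, conditionally on $\mathcal{E}_m$, Algorithm~\ref{alg:cic} is carried out for a \emph{fixed} predictor $\tau_m$ and fixed pseudo-outcome function $\chi_m$, placing us exactly in the setting of Theorem~\ref{theorem1}. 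The only thing that needs checking is that the inequalities driving the proof of Theorem~\ref{theorem1} can be assembled with constants that do not depend on the particular realization of $\tau_m$, so that the conditional rate can be lifted to an unconditional one.

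The verification proceeds step by step along the template of Theorem~\ref{theorem1}. First, Lemma~\ref{lem:lem1} is a pointwise property of the isotonic minimizer and uses no condition on $\tau$, so it applies verbatim to $\tau_m$. Second, the bias bound $P\{[\gamma_0(\tau_n^*,\cdot)-\tau_n^*](\chi_0-\chi_m)\}\lessapprox \|\gamma_0(\tau_n^*,\cdot)-\tau_n^*\|\cdot\|(\pi_m-\pi_0)(\mu_m-\mu_0)\|$ follows from the pseudo-outcome identity and Condition~\ref{assumption::A2}, both of which are unaffected by randomizing $\tau$. Third, the empirical process bound is driven by the uniform entropy integrals of $\mathcal{F}_{\tau_m,iso}$, $\mathcal{F}_{\tau_m,TV}$, and $\mathcal{F}_{Lip,m}$. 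The key observation is that for any discrete probability measure $Q$ on $\mathcal{W}$, one has
\[ N(\varepsilon,\mathcal{F}_{\tau_m,iso},\|\cdot\|_Q)=N(\varepsilon,\mathcal{F}_{iso},\|\cdot\|_{Q\circ\tau_m^{-1}}),\]
and therefore $\mathcal{J}(\delta,\mathcal{F}_{\tau_m,iso})\leq \mathcal{J}(\delta,\mathcal{F}_{iso})$, with an analogous inequality for $\mathcal{F}_{\tau_m,TV}$ (whose envelope is controlled through Lemma~\ref{lemma::TVnormBounded} applied to the realized $\tau_m$ under Condition~\ref{assumption::A6}). These entropy bounds are thus majorized by universal quantities that do not depend on $\mathcal{E}_m$.

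With these ingredients in hand, the peeling argument of Theorem~\ref{theorem1}, carried out conditionally on $\mathcal{E}_m$ with the same choice $\varepsilon_n:=\max\{\ell^{-1/3},\|(\pi_m-\pi_0)(\mu_m-\mu_0)\|\}$ (which is $\mathcal{E}_m$-measurable, as required), yields, for every $\varepsilon>0$, a constant $M_\varepsilon$ independent of the realization of $\mathcal{E}_m$ such that
\[ P\!\left(\text{CAL}(\tau_n^*)\geq M_\varepsilon\,\varepsilon_n^2\,\middle|\,\mathcal{E}_m\right)\leq \varepsilon\]
eventually. The main obstacle I anticipate is precisely this uniformity claim: one must confirm that the maximal-inequality constants (Theorem 2.10.20 of \citet{van1996weak}) and the entropy majorants depend on $\tau_m$ only through the universal upper bounds fixed by Conditions~\ref{assumption::A1}, \ref{assumption::A2}, \ref{assumption::A4}, and~\ref{assumption::A6}, and not through finer features of the realized $\tau_m$. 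Once this uniformity is secured, averaging the conditional bound over $\mathcal{E}_m$ via the tower property converts the conditional $O_P$-rate into the unconditional $O_P\!\left(\ell^{-2/3}+\|(\pi_m-\pi_0)(\mu_m-\mu_0)\|^2\right)$ rate asserted in the theorem.
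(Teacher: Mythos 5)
Your proposal is correct and follows essentially the same route as the paper: condition on $\mathcal{E}_m$ so that $\tau_m$, $\pi_m$, $\mu_m$, and $\chi_m$ become fixed, rerun the Theorem~\ref{theorem1} argument verbatim, and check that Lemma~\ref{lem:lem1}, the bias bound, and the entropy bounds (via the push-forward identity $N(\varepsilon,\mathcal{F}_{\tau_m,iso},\|\cdot\|_Q)=N(\varepsilon,\mathcal{F}_{iso},\|\cdot\|_{Q\circ\tau_m^{-1}})$ together with Lemma~\ref{lem:lem2} and Lemma~\ref{lemma::TVnormBounded}) hold with constants uniform over realizations of $\mathcal{E}_m$. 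The ``main obstacle'' you flag --- uniformity of the maximal-inequality and entropy constants --- is exactly the point the paper's proof addresses, and your sketch of why it holds matches theirs.
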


\begin{proof}
Arguing exactly as in Theorem \ref{theorem1} with $\tau$ taken to be $\tau_m$ and conditioning on $\mathcal{E}_m$ as needed, we obtain the basic inequality stating that
\[
 \norm{\gamma_0(\tau_n^*, \cdot) - \tau_n^* }^2\leq P\{[\gamma_0(\tau_n^*, \cdot) - \tau_n^*](\chi_0- \chi_m)\}+(P-P_{\ell})\{ [\gamma_0(\tau_n^*, \cdot) - \tau_n^*](\chi_m- \tau_n^*)\}
\]
$P$-almost surely, where $\tau_n^* := \theta_n^* \circ \tau_m$. To establish the result of the theorem, we only need to make minor modifications to the proof of Theorem \ref{theorem1} to allow $\tau$ to be replaced by $\tau_m$. We sketch those modifications here. A core component of the proof of Theorem \ref{theorem1} involved upper-bounding $ E[S_n(\delta)\,|\, \mathcal{E}_m]$; this must now be done with $S_n(\delta)$ defined as
\begin{align*}
\sup_{\tau_1 \in \mathcal{F}_{\tau_m,TV}, \tau_2 \in \mathcal{F}_{\tau_m, iso}: \norm{ \tau_1 - \tau_2  }\leq \delta } (P-P_{\ell}) [(\tau_1 - \tau_2)(\chi_m- \tau_2)]=\sup_{f \in \mathcal{F}_{Lip,m}: \norm{f} \leq \delta K}(P-P_{\ell})f
\end{align*}
with $\tau_m$ now a random predictor. Previously, we showed that $ E[S_n(\delta) \,|\, \mathcal{E}_m]$ can be bounded by a nonrandom constant depending on $n$, $m$ and $\delta$ that is independent of $\mathcal{E}_m$. To do so, we showed that the random function class $ \mathcal{F}_{Lip,m}$ is fixed conditional on $\mathcal{E}_m$, uniformly bounded, and has uniform entropy integral bounded by the uniform entropy integral of $\mathcal{F}_{iso}$. It suffices to show that this remains true when $\tau$ is replaced by $\tau_m$. Since $\tau_m$ is obtained from $\mathcal{E}_m$, as with $\chi_m$, the predictor $\tau_m$ is deterministic conditionally on $\mathcal{E}_m$. As a consequence, the function classes $\mathcal{F}_{\tau_m, TV}$ and $\mathcal{F}_{\tau_m, iso}$, which are now random through $\tau_m$, are fixed conditional on $\mathcal{E}_m$. Since $\mathcal{F}_{Lip,m}$ is obtained from a Lipschitz transformation of elements of $\mathcal{F}_{\tau_m, TV}$ and $\mathcal{F}_{\tau_m, iso}$, we have that $\mathcal{F}_{Lip,m}$ is also fixed conditional on $\mathcal{E}_m$. Moreover, by the same argument as in the proof of Lemma \ref{lem:lem2}, which also holds for random $\tau$, these function classes are uniformly bounded by a nonrandom constant almost surely. Finally, the preservation of the uniform entropy integral argument of the proof of Theorem \ref{theorem1} is valid with $\tau$ random. With these modifications to the proof of Theorem \ref{theorem1}, the result follows. 
\end{proof}
 
\section{Simulation studies}\label{subsecApp:simstudy}

\subsection{Data-generating mechanisms}\label{subsecApp:simdatainfo}

In simulation studies, data units were generated as follows for the two scenarios considered.\\

\noindent Scenario 1:
\begin{enumerate}
    \item generate $W_1,W_2,\ldots,W_4$ independently from the uniform distribution on $(-1,+1)$;
    \item given $(W_1,W_2,W_3,W_4)=(w_1,w_2,w_3,w_4)$, generate $A$ as a Bernoulli random variable with success probability $\pi_0(w_1,w_2,w_3,w_4):=\text{expit}\{-0.25 -w_1 + 0.5w_2 - w_3 + 0.5 w_4\}$;
    \item given $(W_1,W_2,W_3,W_4)=(w_1,w_2,w_3,w_4)$ and $A=a$, generate $Y$ as a Bernoulli random variable with success probability
    	 $\mu_0(a,w_1,w_2,\ldots,w_4):=\text{expit}\{1.5 + 1.5  a + 2  a  |w_1|  |w_2| - 2.5  (1-a)  |w_2|  w_3 + 2.5  w_3 + 2.5  (1-a)  \sqrt{|w_4|} - 1.5  a  I(w_2 < 0.5) + 1.5  (1-a)  I(w_4 < 0)\}$.
\end{enumerate}

\noindent Scenario 2:
\begin{itemize}
    \item generate $W_1,W_2,\ldots,W_{20}$ independently from the uniform distribution on $(-1,+1)$;
    \item given $(W_1,W_2,\ldots,W_{20})=(w_1,w_2,\ldots,w_{20})$, generate $A$ as a Bernoulli random variable with success probability $\pi_0(w_1,w_2,\ldots,w_{20}):=\text{expit}\{0.2-0.5w_1-0.5w_2-0.5w_3+0.5w_4-0.5w_5+0.5w_6-0.5w_7-0.5w_8-0.5w_9-0.2w_{10}+0.5w_{11}-w_{12}+w_{13}-1.5w_{14}+w_{15}-w_{16}+2w_{17}-w_{18}+1.5w_{19}-w_{20}\}$;
    \item given $(W_1,W_2,\ldots,W_{20})=(w_1,w_2,\ldots,w_{20})$ and $A=a$, generate $Y$ as a normal random variable with mean $\mu_0(a,w_1,w_2,\ldots,w_{20})=-0.5 + 3.5  a + 3  a  w_1 + 6.5  (1-a)  w_2 + 1.5  a  w_3 + 4  (1-a)  w_4 + 2.5  a  w_5 - 6  (1-a)  w_6 + 1  a  w_7 + 4.5  (1-a)  w_8 + a  w_9  + 2.5  (1-a)  w_{10} +  1.5  w_{11} - 2.5   w_{12} + w_{13} - 1.5  w_{14} + 3  w_{15} - 2  w{_16} + 3  w_{17} - w_{18} + 1.5 w_{19} - 2  w_{20}$ and unit variance.
\end{itemize}
Coefficients of the propensity score logistic regression models above were selected such that the probabilities of treatment were bounded between 0.05 and 0.95 in the low-dimensional case (Scenario 1), and between 0.01 and 0.99 in the high-dimensional setting (Scenario 2).

\subsection{Implementation of the causal isotonic calibrator}\label{subsecApp:impcic}

In our simulation studies, we followed Algorithm \ref{alg:ciccrossfit::pooled} to fit the causal isotonic calibrator. In particular, we estimated the components of $\chi_0$ (i.e., $\mu_0$ and $\pi_0$) using the Super Learner \citep{van2007super} in Scenario 1, and penalized regression in Scenario 2. Super learner is an ensemble learning approach that uses cross-validation to select a convex combination of a library of candidate prediction methods. Table \ref{table1:app} shows the library of prediction models we used to estimate $\mu_0$ and $\pi_0$. Note that all of our models for the outcome regression were misspecified in Scenario 1 because of the nonlinearities in the true outcome regression. However, in both scenarios, the propensity score estimator was a consistent estimator of the true propensity score. Additionally, for numerical stability, we imposed a threshold on the estimated propensity scores such that it took values between 0.01 and 0.99. We used the R package \textsf{sl3}  \citep{coyle2021sl3-rpkg} to implement the estimation procedure. Finally, we used the R function \textsf{isoreg} to 
performed the isotonic regression step. 

\begin{center} 
\begin{table}[H]
  \caption{Information on the set of estimators used by the Super Learner to estimate the pseudo-outcome components. Abbreviations:  generalized additive models (GAM), generalized linear model (GLM), generalized linear model with lasso regularization (GLMnet), gradient boosted trees (GBRT), random forests (RF), multivariate adaptive regression splines (MARS).}
\label{table1:app}
\vskip 0.15in
\centering
\begin{tabular}{c|cc}
scenario & library for $\mu_0$  &
  library for $\pi_0$ \\ \hline
1 & logistic regression, GLMnet, GAM, & logistic regression, GLMnet, GAM,\\
  & GBRT with depth $\in\{2,3,5,6,8\}$, & GBRT with depth $\in\{2,4,6\}$ \\
 & RF, MARS & \\
 \hline
2 & GLMnet & GLMnet
\end{tabular}
\end{table}
\end{center}

\subsection{Performance metrics}\label{subsecApp:perfmet}

We estimated the performance metrics as follows. With a slight abuse of notation, let $\hat{\tau}$ denote an arbitrary estimated treatment effect predictor or its calibrated version. For each fitted $\hat{\tau}$ in a given simulation, we computed its mean squared error by taking the empirical mean of the squared difference between the fitted values of the CATE estimator and $\tau_0$,
\begin{equation*}
\widehat{\text{MSE}}(\hat{\tau}):= \frac{1}{n_{\mathcal{V}}}\sum_{i:w_i \in \mathcal{V}}[\hat{\tau}(w_i) - \tau_0(w_i)]^2.
\end{equation*}
We obtained the estimated calibration measure in two steps. We recall that the calibration measure for a given predictor $\tau$ is 
\begin{equation*}
 \int\left[\gamma_0(\tau,w) - \tau(w)\right]^2dP_W(w)\ .
\end{equation*}
First, we estimated $\gamma_0(\hat{\tau},w)$ using an independent dataset of 100,000 observations and fitted gradient boosted regression trees with the fitted values of the treatment effect predictors as covariates and the true CATE as outcome. For each simulation setting and CATE estimator, the depths of each of the regression trees were obtained using cross-validation in a separate simulation. Let $\hat{\gamma}_0(\hat{\tau},w)$ denote the estimated function. In the second step, we used the sample $\mathcal{V}$ to estimate the calibration measure as
\begin{equation*}
\widehat{\text{CAL}}(\tau):= \frac{1}{n_{\mathcal{V}}}\sum_{i: w_i \in \mathcal{V}}\left[\tau_0(w_i) -  \hat{\tau}(w_i)\right]\left[\hat{\gamma}_0(\hat{\tau},w_i) - \hat{\tau}(w_i)\right].
\end{equation*}
The above measure has the advantage of having less bias with respect to $\text{CAL}(\hat{\tau})$ than the plug-in estimator $n_{\mathcal{V}}^{-1}\sum_{i: w_i \in \mathcal{V}}\left[\hat{\gamma}_0(\hat{\tau},w_i) - \hat{\tau}(w_i)\right]^2$. 

\section{Simulation results}
\label{appendix::simResults}


\begin{table*}[htb]\small
\caption{Scenario 1 bias within bins of predictions for the calibrated and uncalibrated estimators. Each row shows the resulting bias for a given CATE estimator, and the Cal column indicates if it is calibrated or not. The columns are organized by sample size, and within each sample size, we show the results for the bias in the upper and lower deciles. Abbreviations: calibrated (cal), estimator (est), generalized additive models (GAM), generalized linear model (GLM), generalized linear model with lasso regularization (GLMnet), gradient boosted regression trees (GBRT), random forests (RF), multivariate adaptive regression splines (MARS).}
\label{tabApp:bwb1}
\vskip 0.15in
\centering
\begin{tabular}{ll|cc|cc|cc|cc}
\multicolumn{2}{l}{Sample Size}                                 & \multicolumn{2}{|c}{1000}                                                                                                                                  & \multicolumn{2}{|c}{2000}                                                                                                                                  & \multicolumn{2}{|c}{5000}                                                                                                                                  & \multicolumn{2}{|c}{10000}                                                                                                                                 \\ \hline
Cal & \begin{tabular}[c]{@{}l@{}}CATE \\ estimator\end{tabular} & \multicolumn{1}{|l}{\begin{tabular}[c]{@{}l@{}}Lower \\ Decile\end{tabular}} & \multicolumn{1}{l}{\begin{tabular}[c]{@{}l@{}}Upper \\ Decile\end{tabular}} & \multicolumn{1}{|l}{\begin{tabular}[c]{@{}l@{}}Lower \\ Decile\end{tabular}} & \multicolumn{1}{l}{\begin{tabular}[c]{@{}l@{}}Upper \\ Decile\end{tabular}} & \multicolumn{1}{|l}{\begin{tabular}[c]{@{}l@{}}Lower \\ Decile\end{tabular}} & \multicolumn{1}{l}{\begin{tabular}[c]{@{}l@{}}Upper \\ Decile\end{tabular}} & \multicolumn{1}{|l}{\begin{tabular}[c]{@{}l@{}}Lower \\ Decile\end{tabular}} & \multicolumn{1}{l}{\begin{tabular}[c]{@{}l@{}}Upper \\ Decile\end{tabular}} \\ \hline
yes & MARS    & -0.01 & -0.02 & 0.01 & -0.01 & 0    & -0.02 & 0    & -0.01 \\
no  & MARS    & -0.23 & 0.23  & -0.13 & 0.14  & -0.06 & 0.06  & -0.02 & 0.03  \\ \hline
yes & GAM     & -0.04 & 0.02  & -0.01 & 0.03  & 0    & 0.01  & 0    & 0     \\
no  & GAM     & -0.08 & 0.04  & -0.04 & 0.01  & -0.02 & 0     & -0.01 & -0.01 \\ \hline
yes & GLM     & -0.05 & 0.04  & -0.02 & 0.03  & -0.02 & 0.02  & -0.01 & 0.02  \\
no  & GLM     & -0.02 & 0.05  & 0.02  & 0.03  & 0.02  & 0.01  & 0.03  & 0.02  \\ \hline
yes & GLMnet  & -0.05 & 0.04  & -0.02 & 0.02  & -0.02 & 0.02  & -0.01 & 0.02  \\
no  & GLMnet  & 0     & 0.03  & 0.03  & 0.02  & 0.03  & 0.01  & 0.03  & 0.01  \\ \hline
yes & RF    & -0.06 & 0.03  & -0.01 & 0.02  & -0.01 & -0.01 & -0.01 & 0     \\
no  & RF   & -0.34 & 0.34  & -0.3  & 0.31  & -0.28 & 0.27  & -0.24 & 0.25  \\ \hline
yes & GBRT 2       & -0.03 & 0     & 0     & -0.01 & -0.01 & -0.01 & 0     & 0     \\
no  & GBRT 2      & -0.15 & 0.14  & -0.05 & 0.05  & -0.01 & -0.03 & 0.01  & -0.04 \\  \hline
yes & GBRT 5 & -0.01 & -0.03 & 0.03  & -0.06 & 0.03  & -0.06 & 0.03  & -0.05 \\
no  & GBRT 5 & -0.49 & 0.51  & -0.34 & 0.37  & -0.19 & 0.2   & -0.1  & 0.12  \\ \hline
yes & GBRT 8 & -0.02 & -0.03 & 0.02  & -0.06 & 0.05  & -0.09 & 0.05  & -0.09 \\
no  & GBRT 8 & -0.67 & 0.74  & -0.54 & 0.6   & -0.39 & 0.42  & -0.27 & 0.32  \\ \hline 
\end{tabular}
\end{table*}

\begin{table*}[htb]\small
\caption{Scenario 2 bias within bins of predictions for the calibrated and uncalibrated estimators. Each row shows the resulting bias for a given CATE estimator, and the Cal column indicates if it is calibrated or not. The columns are organized by sample size, and within each sample size, we show the results for the bias in the upper and lower deciles. Abbreviations: calibrated (cal), generalized linear model with lasso regularization (GLMnet), gradient boosted regression trees with GLMNet screening (GLMNet scr + GBRT).}
\label{tabApp:bwb2}
\vskip 0.15in
\centering
\begin{tabular}{ll|cc|cc|cc|cc}
\multicolumn{2}{l}{Sample Size} & \multicolumn{2}{|c}{1000} & \multicolumn{2}{|c}{2000} & \multicolumn{2}{|c}{5000} & \multicolumn{2}{|c}{10000} \\ \hline
Cal & \begin{tabular}[c]{@{}l@{}}CATE \\ estimator\end{tabular} & \multicolumn{1}{|l}{\begin{tabular}[c]{@{}l@{}}Lower \\ Decile\end{tabular}} & \multicolumn{1}{l}{\begin{tabular}[c]{@{}l@{}}Upper \\ Decile\end{tabular}} & \multicolumn{1}{|l}{\begin{tabular}[c]{@{}l@{}}Lower \\ Decile\end{tabular}} & \multicolumn{1}{l}{\begin{tabular}[c]{@{}l@{}}Upper \\ Decile\end{tabular}} & \multicolumn{1}{|l}{\begin{tabular}[c]{@{}l@{}}Lower \\ Decile\end{tabular}} & \multicolumn{1}{l}{\begin{tabular}[c]{@{}l@{}}Upper \\ Decile\end{tabular}} & \multicolumn{1}{|l}{\begin{tabular}[c]{@{}l@{}}Lower \\ Decile\end{tabular}} & \multicolumn{1}{l}{\begin{tabular}[c]{@{}l@{}}Upper \\ Decile\end{tabular}} \\ \hline
yes & GLMnet & -0.01 & 0.01 & -0.04 & -0.01 & -0.04 & -0.01 & -0.03 & -0.01 \\ \hline
no & GLMnet & -0.11 & -0.07 & -0.11 & -0.06 & -0.08 & -0.04 & -0.07 & -0.03 \\ \hline
yes & \begin{tabular}[c]{@{}l@{}}GLMnet scr \\ + GBRT \end{tabular} & -0.11 & -0.08 & -0.12 & -0.08 & -0.12 & -0.07 & -0.1 & -0.06 \\ \hline
no & \begin{tabular}[c]{@{}l@{}}GLMnet scr \\ + GBRT \end{tabular} & 0.09 & 0.03 & 0.05 & 0.01 & 0.04 & 0.01 & 0.03 & 0 \\ \hline
yes & \begin{tabular}[c]{@{}l@{}}random \\ forest\end{tabular} & -0.03 & -0.01 & -0.03 & -0.02 & -0.04 & -0.02 & -0.03 & -0.02 \\ \hline
no & \begin{tabular}[c]{@{}l@{}}random \\ forest\end{tabular} & -0.8 & -0.41 & -0.72 & -0.38 & -0.62 & -0.33 & -0.54 & -0.29 \\ \hline
\end{tabular}
\end{table*}
 
\begin{figure}
   \centering
\begin{subfigure}[H]{0.45\linewidth}
 
\includegraphics[width= \linewidth]{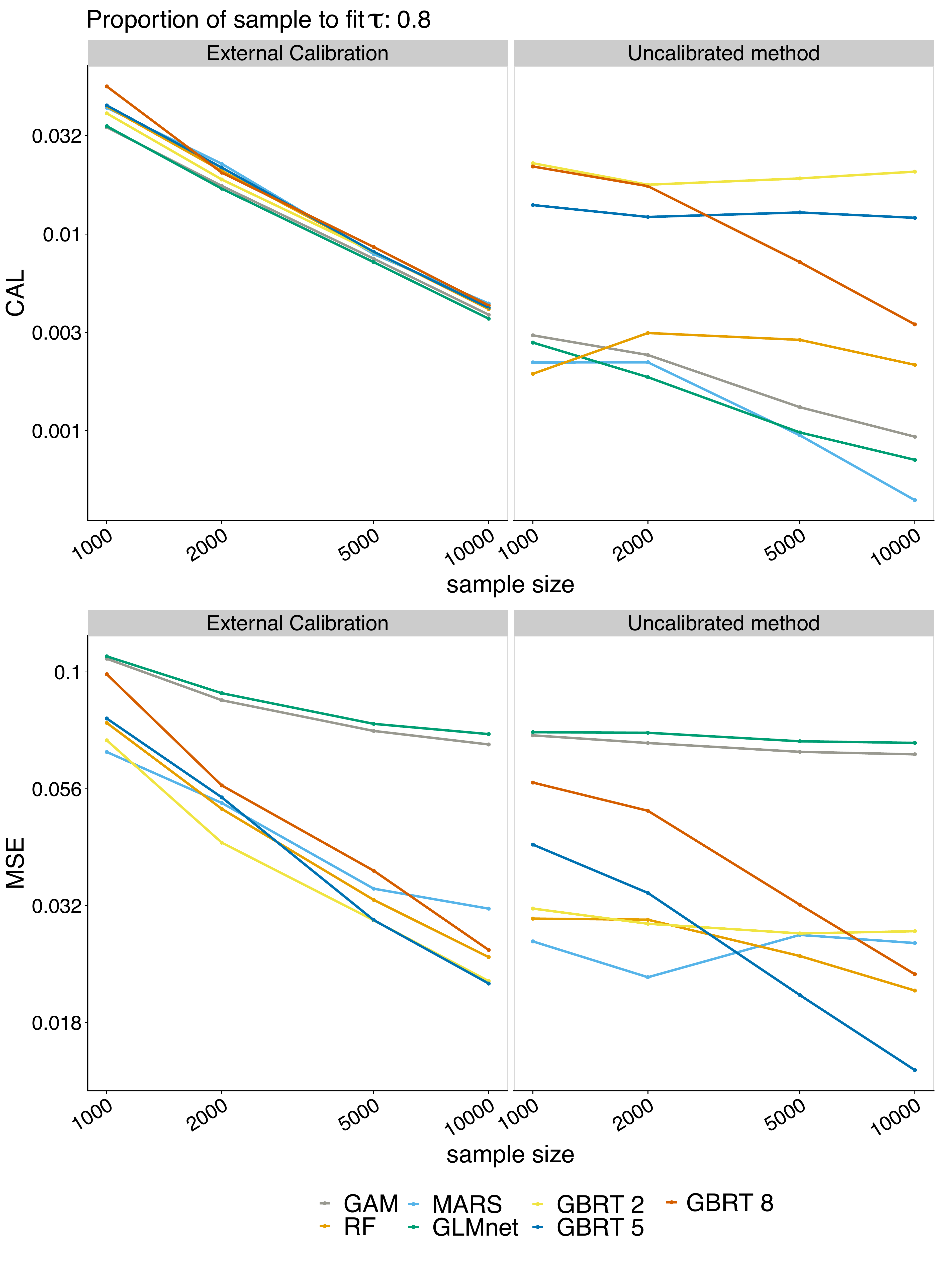}
\caption{Scenario 1 calibration measure and MSE simulation results for causal calibration approach with an external hold-out dataset. The top left and right panels show the  calibration measure and using conventional calibration and the uncalibrated estimator, respectively. Similarly, the bottom plots show MSE for the calibrated and uncalibrated estimators. Results for GLM and GBRT with depths of 3 and 6 are omitted because they were nearly identical to results shown for GLMnet and GBRT with other depths, respectively.}
\end{subfigure}\hspace{0.5cm}\begin{subfigure}[H]{0.45\linewidth}
\centering
\includegraphics[width= \linewidth]{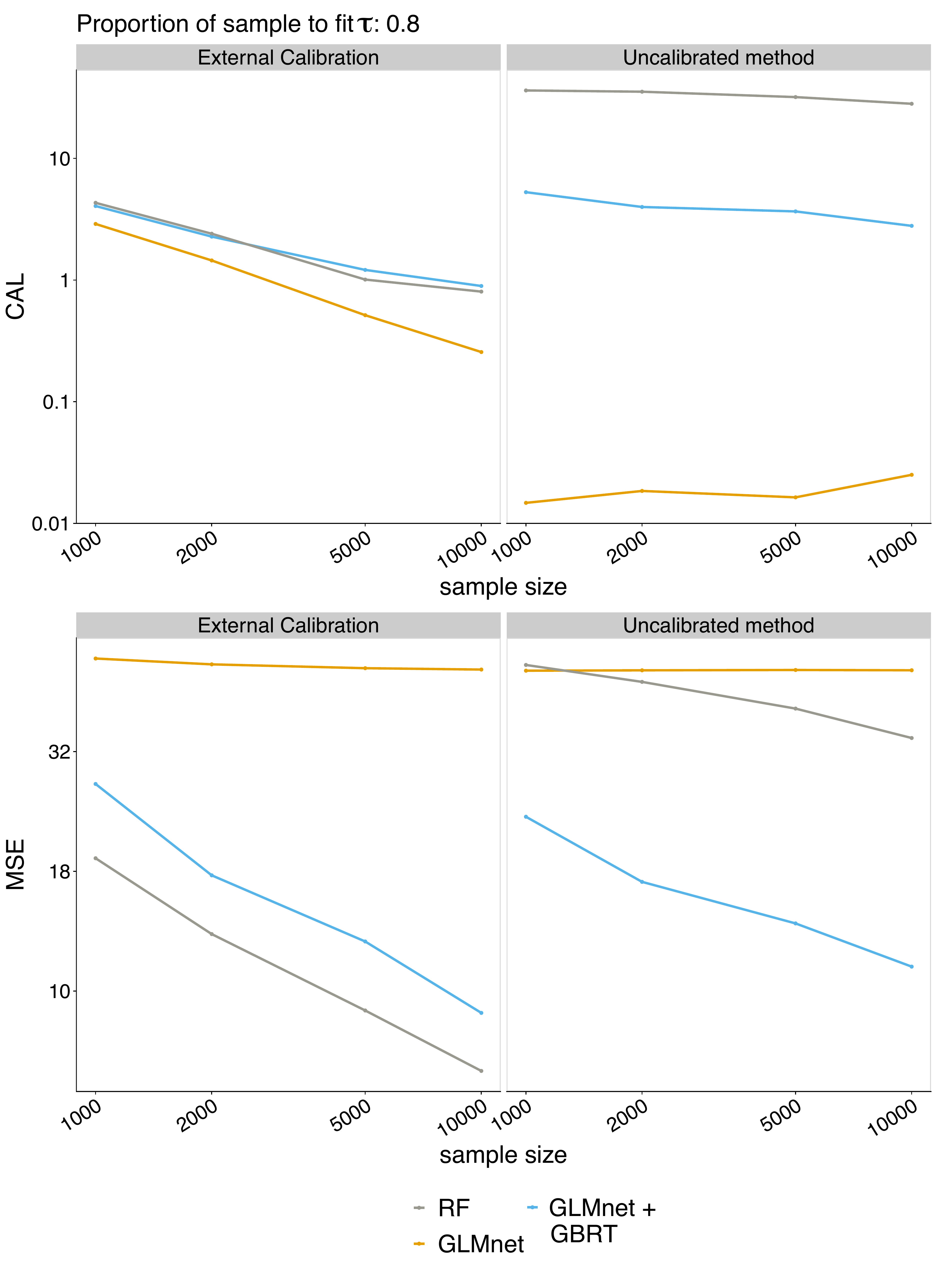}
\caption{Scenario 2 calibration measure and MSE simulation results for causal calibration approach with hold-out dataset. The top left and right panels show the calibration error using conventional calibration and the uncalibrated estimator, respectively. Similarly, the bottom plots show the MSE for the calibrated and uncalibrated estimators.}
\end{subfigure}
\caption{Causal isotonic calibration with a hold-out dataset external to the training dataset: Monte-Carlo estimates of calibration measure and MSE for calibrated vs uncalibrated predictors for Scenarios 1 and 2.}
\label{figApp:externalcal}
\end{figure}

\end{document}